\documentclass{article}

\usepackage{microtype}
\usepackage{graphicx}
\usepackage{subcaption}
\usepackage{booktabs} 

\usepackage{hyperref}

\usepackage[preprint]{icml2026}

\usepackage{amsmath}
\usepackage{amssymb}
\usepackage{mathtools}
\usepackage{amsthm}

\usepackage{thm-restate}

\usepackage[capitalize,noabbrev]{cleveref}

\theoremstyle{plain}
\newtheorem{theorem}{Theorem}[section]

\theoremstyle{definition}

\newtheorem{assumption}[theorem]{Assumption}
\theoremstyle{remark}
\newtheorem{remark}[theorem]{Remark}

\usepackage[textsize=tiny]{todonotes}

\newcommand{\figwidthtwo}{0.48\textwidth}
\newcommand{\figwidththree}{0.30\textwidth}

\newcommand{\figwidthfour}{0.225\textwidth}

\newcommand{\figwidthfive}{0.18\textwidth}

\input{definitions}

\icmltitlerunning{Gradient Residual Connections}

\begin{document}

\twocolumn[
  \icmltitle{Gradient Residual Connections}



  \icmlsetsymbol{equal}{*}

  \begin{icmlauthorlist}
    \icmlauthor{Yangchen Pan}{yyy}
    \icmlauthor{Qizhen Ying}{yyy}
    \icmlauthor{Philip Torr}{yyy}
    \icmlauthor{Bo Liu}{comp}
  \end{icmlauthorlist}

  \icmlaffiliation{yyy}{Department of Engineering Science, University of Oxford}
  \icmlaffiliation{comp}{Electrical and Computer Engineering Department, University of Arizona}

  \icmlcorrespondingauthor{Yangchen Pan}{yangchen.pan@eng.ox.ac.uk}

  \icmlkeywords{Machine Learning, ICML}

  \vskip 0.3in
]



\printAffiliationsAndNotice{}  

\begin{abstract}
Existing work has linked properties of a function’s gradient to the difficulty of function approximation. Motivated by these insights, we study how gradient information can be leveraged to improve neural networks' ability to approximate high-frequency functions, and we propose a gradient-based residual connection as a complement to the standard identity skip connection used in residual networks. 
We provide simple theoretical intuition for why gradient information can help distinguish inputs and improve the approximation of functions with rapidly varying behavior. On a synthetic regression task with a high-frequency sinusoidal ground truth, we show that conventional residual connections struggle to capture high-frequency patterns. In contrast, our gradient residual substantially improves approximation quality. We then introduce a convex combination of the standard and gradient residuals, allowing the network to flexibly control how strongly it relies on gradient information. After validating the design choices of our proposed method through an ablation study, we further validate our approach's utility on the single-image super-resolution task, where the underlying function may be high-frequency. Finally, on standard tasks such as image classification and segmentation, our method achieves performance comparable to standard residual networks, suggesting its broad utility. 
\end{abstract}

\section{Introduction}\label{sec:intro}
There has been sustained interest in using a frequency-based perspective to study the generalization and learning behavior of machine learning algorithms. A dataset (or the target function it induces) can be viewed as a signal composed of components at different frequencies. From this viewpoint, high-frequency components are often harder to learn or reconstruct, since they typically require more samples to approximate accurately—for example, a rapidly oscillating sinusoidal function compared to a low-frequency one \cite{smale2004shannon,smale2005shannon2}.

A close line of work studies the learning dynamics of neural networks through a frequency lens. A common observation is that neural networks trained with gradient-based methods tend to fit low-frequency components earlier (or faster) than high-frequency components, and several works provide empirical and theoretical interpretations of this phenomenon \cite{rahaman2019spectralbias,xu2018fprinciplegeneralloss,xu2020fprinciple,xu2021deepfreq,basri2019convergenceratefreq}. Relatedly, \citet{cao2021spectralbias} provides more specific analysis about certain spectral components can be learned more efficiently than others. Motivated by the difficulty of learning high-frequency functions, some architectures explicitly incorporate sinusoidal structure; for instance, \citet{sitzmann2020siren} use periodic activation functions to better represent high-frequency signals.


Among those work studying learning behavior from a frequency angle, the most closely related is \citet{pan2020dynafreq}, which connects learning difficulty and frequency by proposing a computationally convenient proxy for local frequency of a function around a point (e.g., via gradient or Hessian magnitude). This makes frequency not only an analysis tool, but also an implementable principle for data acquisition and prioritization. Such viewpoint is also loosely connected to classical uncertainty-type intuitions in harmonic analysis \cite{stein2003fourier}: functions that are highly localized in the input domain typically require a broader spread of Fourier components for accurate representation, which could imply involving higher-frequency content.

In practice, learning a high-frequency function from real-world datasets often requires capturing sophisticated input–output relationships, which heavily rely on the use of a powerful function approximator such as a deep neural network. However, training a deep neural network can be nontrivial and challenging. Among the most popular techniques are skip connections, particularly residual networks, which are widely adopted due to their simplicity and effectiveness. Residual connections have been used across a broad range of applications, from basic image classification \cite{he2016deepresidual} to solving reinforcement learning tasks such as AlphaGo \cite{silver2016mastering}.

We would like to emphasize that in our work, \emph{high-frequency information} refers to learning or approximating a high-frequency function: for example, $\sin(8\pi)$ varies much more rapidly than $\sin(2\pi)$. This is different from some works in vision \cite{ma2023understanding,si2024freeu,zhang2024residualharm} that discuss high-frequency image details, where high frequency refers to the input signal itself (e.g., edges/textures/high frequency components by fourier transformation of an image), rather than the underlying function being approximated. Nevertheless, it is reasonable that some tasks may require modeling both aspects.

Our key intuition is that a function’s gradients reflect its local frequency and can be highly informative for distinguishing points in high-frequency regions. This motivates our central research question: \emph{how can we explicitly incorporate gradient information into a neural network to better approximate high frequency functions?} We address this by introducing \emph{gradient residual connections}, which enable the network to approximate high-frequency functions more easily, while remaining competitive on standard benchmarks.

We begin with reviewing background on the relationship between gradient information and learning difficulty, and discuss how such information can be exploited in deep learning. We then introduce our main approach, termed gradient residual connections, in which residual paths incorporate gradients of intermediate representations with respect to the earlier layers, with the goal of enhancing sensitivity to high-frequency structure. We further provide theoretical insights into why gradient vectors can help distinguish data points in high-frequency regions, extending existing theoretical connections between gradient magnitude and learning difficulty. Finally, we conduct extensive experiments on both synthetic and real-world datasets to validate our design choices and the effectiveness of our method. In particular, we show clear benefits on super-resolution tasks, while also achieving performance comparable to conventional residual networks on standard image classification and segmentation benchmarks. We conclude by summarizing our contributions and discussing limitations and directions for future work.

\section{Background}\label{sec:bg}
This section introduces background on conventional residual connections in deep neural networks, as well as existing work that leverages frequency-based perspectives to understand function approximation.

\subsection{Residual Connections}

Optimizing deep neural networks requires nontrivial effort, as it introduces additional challenges compared with training shallow models or linear functions. These challenges range from architecture-dependent initialization, architecture design, and optimization algorithm design (e.g., gradient-based, adaptive-gradient, or non-gradient methods), to issues such as vanishing gradients or gradient explosion (the latter typically arising in RNNs). As a result, a variety of techniques have been developed to tackle these challenges.

Among these, one of the most popular techniques is \emph{residual networks}~\cite{srivastava2015highway,srivastava2015training,he2016deepresidual,he2016identity}, which aim to improve the optimization of very deep networks by enhancing gradient flow. The core idea is the use of skip (or shortcut) connections. Let \( \Fvec(\cdot): \mathbb{R}^d \rightarrow \mathbb{R}^d \) be a mapping between two \(d\)-dimensional representation spaces (downsampling or upsampling may be adopted to ensure matching dimensionality). Residual networks are characterized by the operation
\[
\mathbf{h}(\mathbf{x}) \defeq \Fvec(\mathbf{x}) + \mathbf{x}, \quad \mathbf{x} \in \mathbb{R}^d,
\]
where \(\mathbf{x}\) may be the raw input or an intermediate representation produced by earlier layers. The key point is that this composition \(\mathbf{h}\) is used as the input for subsequent layers, instead of \(\Fvec(\mathbf{x})\) alone. This design effectively reduces, or allows the network to flexibly control, the amount of information that must be captured by \(\Fvec\), which is particularly important when many layers of computation are involved and gradient flow would otherwise become very small or vanish.


\subsection{Frequency and Function Approximation}

A frequency-based perspective is a useful way to interpret learning difficulty. In particular, ~\citet{pan2020dynafreq} define, for any function $f:\mathbb{R}^n\to\mathbb{R}$, a \emph{local} function on the unit ball $B(x,1)=\{y:\|y-x\|<1\}$ and its \emph{local Fourier transform}
$
\hat f(k)=\int_{B(x,1)} f(y)\,e^{-2\pi i\, y^\top k}\,dy.
$
Assuming finite local energy $\int_{B(x,1)} f(y)^2\,dy<\infty$, they introduce the normalized local frequency distribution
$
\pi_f(k)=\frac{\lvert \hat f(k)\rvert^2}{\int_{\mathbb{R}^n}\lvert \hat f(\kappa)\rvert^2\,d\kappa}.
$
The theorem then links local smoothness to local spectral content: the squared gradient (respectively, Hessian) norm integrated over $B(x,1)$ equals the local function energy multiplied by the \emph{second} (respectively, \emph{fourth}) moment of $\pi_f$ (up to constants), i.e., the average $\|k\|^2$ (respectively, $\|k\|^4$) under $\pi_f$. This connection is consistent with the uncertainty principle \cite{stein2003fourier} intuition: stronger spatial localization of $f$ within a neighborhood $B(x,1) $(i.e. the function energy is more concentrated around the centre $x$) necessitates greater high-frequency content, and thus larger local derivative norms. 

Related ideas of utilizing the connection between gradient information and learning difficulty also appear in the area of active learning. For instance, Variance of Gradients~\cite{agarwal2022vog} characterizes example difficulty using statistics of per-sample gradients over training, while BADGE~\cite{ash2020badge} selects informative and diverse samples via gradient-based embeddings. 

Inspired by this line of work, we introduce in Section~\ref{sec:mainapproach} our gradient residual connections and introduce intuitive baselines in Section~\ref{sec:sinexp} to encourage the network to capture higher-frequency information by using gradient information.

\section{Gradient-based Residual Connections}\label{sec:mainapproach}
This section introduces our main approach—gradient-based residuals—and provides theoretical insights into how such gradient vectors can be used to distinguish points in the high-frequency region. 

\subsection{Gradient-based Residual}\label{sec:gd-residual}

We know that the standard residual connection adds an identity mapping from an earlier representation to a later one after some transformations, i.e., \(\xvec \rightarrow \Fvec(\xvec) + \xvec\).
Building on existing results linking gradient information to the difficulty of learning high-frequency functions, we propose leveraging the gradient vector itself as a representation residual added to the earlier layer.
Given \(\xvec \in \RR^d\) and \(\Fvec: \RR^d \to \RR^d\), i.e., the mapping from an earlier layer to a later one, we formally propose the following gradient-based shortcut:
{\setlength{\abovedisplayskip}{4pt}
 \setlength{\belowdisplayskip}{4pt}
\begin{small}
\begin{equation}\label{eq-gdshortcut}
    \hvec_g(\xvec) \defeq \Fvec(\xvec) + \xvec + \sum_{i=1}^d \nabla \Fvec_i(\xvec),
\end{equation}
\end{small}
}
where we define \(\nabla \Fvec(\xvec)\)---the gradient w.r.t. $\xvec$---as the Jacobian matrix with dimension \(d\times d\), and \(\sum_{i=1}^d \nabla \Fvec_i(\xvec) \in \RR^d\) sums over rows of the Jacobian and has the same dimensionality as \(\xvec \in \RR^d\).
Intuitively, this design attempts to leverage a representation that indicates how sensitive the overall feature vector \(\Fvec(\cdot)\) is to each input dimension.

However, there are several sensible drawbacks. First, the gradient itself can be highly sensitive, potentially resulting in instability of the training process.
Second, if the underlying function we are trying to learn is not highly frequency-dependent, this design may be harmed by overfitting to irrelevant variation. Therefore, we propose a simple modification to both stabilize and broaden the utility of such shortcuts: a \emph{convex combination of skips}.
{\setlength{\abovedisplayskip}{4pt}
 \setlength{\belowdisplayskip}{4pt}
\begin{small}
\begin{equation}\label{eq-convexgdresidual}
    \hvec(\xvec) \defeq \Fvec(\xvec) + (1-\sigma(\alpha)) \xvec + \sigma(\alpha) \sum_{i=1}^d \nabla  \Fvec_i(\xvec),
\end{equation}
\end{small}
}
where $\alpha$ is a trainable scalar parameter and $\sigma(\cdot)$ is the sigmoid function so that $\sigma(\alpha)$ interpolates between the standard residual and the gradient-based shortcut. We emphasize that this is a natural and minimal design choice rather than a strict requirement. In practice, the formulation may depend on the application domain and the neural network architecture. For more sophisticated networks, both the standard residual connection and the gradient residual connection may be necessary; in such cases, one might introduce independent trainable parameters for each term instead of using a convex combination. We include more extensive studies about this design choice in Section \ref{sec:appendix-srexp}. 


\begin{remark}
\citet{pan2020dynafreq} takes the gradient norm of the measure of learning difficulty, instead of the gradient vector itself; in fact, our theoretical insight \ref{sec:gd-theory} shows that gradient vector could be a useful representation used to distinguish points, independent of the norm of the gradient. 
\end{remark}

\begin{remark}
One might consider an alternative design that aggregates gradients across input dimensions (of \(\xvec\)) rather than across feature dimensions of \(\Fvec\).
However, this type of computation is not amenable to automatic differentiation, as it typically requires explicit loops over feature dimensions and can be significantly more costly.
More importantly, our choice
$
\sum_{i=1}^d \nabla \Fvec_i(\xvec) \;=\; \nabla_\xvec \Big( \sum_{i=1}^d \Fvec_i(\xvec) \Big)
$
is the gradient of a well-defined scalar functional of the features.
This preserves a clear interpretation of the shortcut as a single ``sensitivity direction'' in input space induced by \(\Fvec\), whereas aggregating over input dimensions would in general no longer correspond to the gradient of any scalar potential and would blur this geometric meaning. 
\end{remark}

In practice, to further stabilize the gradient vector, we normalize it to have a controlled norm by using
$
\frac{\sum_{i=1}^d \nabla \Fvec_i(\xvec)}{\big\|\sum_{i=1}^d \nabla \Fvec_i(\xvec)\big\|_2}, 
$
where a zero denominator can be avoided by adding a small constant. 

\subsection{Theoretical Insight}\label{sec:gd-theory}

The goal of this section is to provide mathematical intuition for why the (normalized) gradient vector can be a useful representation for distinguishing points in the high-frequency part of a function. Informally, this is because two points can have nearly opposite gradient directions even though they are extremely close in Euclidean distance.
We begin with introducing some definitions and assumptions. Detailed proofs are in the Appendix \ref{sec:appendix-theory}. 


\textbf{Define concerned function} $f:\RR^d\to\RR$ be $C^1$ with Fourier transform $\fhat$ (under the convention
$\fhat(\xivec)=\int_{\RR^d} f(\xvec)e^{-i\xvec\cdot\xivec}\,d\xvec$ and
$f(\xvec)=\frac{1}{(2\pi)^d}\int_{\RR^d}\fhat(\xivec)e^{i\xvec\cdot\xivec}\,d\xivec$). One might think of this as the function we are trying to approximate in a machine learning task. 

\textbf{Define high frequency component of $f$.} Fix a unit vector $\uvec\in\RR^d$, a frequency $\omega>0$, and bandwidth $\delta>0$. Define the band
\begin{small}
\begin{align}
B(\omega\uvec,\delta) \defeq \{\xivec\in\RR^d:\ \|\xivec-\omega\uvec\|\le \delta\},
\end{align}
\end{small}
based on which we can use the band-pass mask: 
\begin{small}
\begin{align}
\fhat_h(\xivec) \defeq \fhat(\xivec)\,\mathbf{1}_{B(\omega\uvec,\delta)}(\xivec),
\qquad
f_h \defeq \mathcal{F}^{-1}(\fhat_h),
\end{align}
\end{small}
and set $f_l\defeq f-f_h$ (so $f=f_l+f_h$ always).

For conciseness, define the band mass and constant
\begin{small}
\begin{equation}\label{eq:def_M_Cd}
M \defeq \int_{B(\omega\uvec,\delta)} \|\xivec\|\,|\fhat(\xivec)|\,d\xivec,
\qquad
C_d \defeq \frac{\pi}{(2\pi)^d}.
\end{equation}
\end{small}

\textbf{Define two points}, which we will show that when they are arbitrarily close to each other, their gradient directions can be quite different. Let $\xvec_0\in\RR^d$ satisfy
\begin{equation}\label{eq:assump_highgrad}
\|\nabla f_h(\xvec_0)\|\ge L
\end{equation}
for some $L>0$, and define another point
$
\xvec_1 \defeq \xvec_0 + \frac{\pi}{\omega}\uvec.
$

\begin{assumption}[Finite mass]\label{assm:finitemass}
\begin{equation}\label{eq:assump_moment1}
\int_{\RR^d} \|\xivec\|\,|\fhat(\xivec)|\,d\xivec<\infty .
\end{equation}
\end{assumption}

\begin{assumption}[Bounded gradient norm]\label{assm:boundedgd} 
For some sufficiently small $\varepsilon>0$,
\begin{equation}\label{eq:assump_lowgrad}
\sup_{\xvec\in\RR^d}\|\nabla f_l(\xvec)\|\le \varepsilon L.
\end{equation}
\end{assumption}

\begin{remark}
Assumption \ref{assm:finitemass} is standard in learning theory when one seeks approximation guarantees for neural networks. \ref{assm:boundedgd} is not strong, as it only bounds the gradient of the low-frequency part of the function. 
\end{remark}

Based on these definitions and assumptions, we are ready to introduce two lemmas, which were further used to prove our main theorem \ref{thm:main-thm}.

\begin{restatable}{lemma}{highgdsumbound}[Upper bound of high frequency component gradient.]\label{lemma:eq:high_sum_bound}
The gradient of $f_h$ at $\xvec_0, \xvec_1$ satisfies: 
\begin{small}
\begin{equation}\label{eq:high_sum_bound}
\|\nabla f_h(\xvec_0)+\nabla f_h(\xvec_1)\|
\le C_d\frac{\delta}{\omega}M.
\end{equation}
\end{small}
\end{restatable}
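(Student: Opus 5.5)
We will work directly from the Fourier representation of $f_h$. By Assumption~\ref{assm:finitemass}, $\|\xivec\|\,|\fhat_h(\xivec)|$ is integrable, so we may differentiate under the integral sign:
\[
\nabla f_h(\xvec)=\frac{1}{(2\pi)^d}\int_{B(\omega\uvec,\delta)} i\,\xivec\,\fhat(\xivec)\,e^{i\xvec\cdot\xivec}\,d\xivec .
\]
Evaluating at $\xvec_0$ and at $\xvec_1=\xvec_0+\frac{\pi}{\omega}\uvec$ and adding gives
\[
\nabla f_h(\xvec_0)+\nabla f_h(\xvec_1)=\frac{1}{(2\pi)^d}\int_{B(\omega\uvec,\delta)} i\,\xivec\,\fhat(\xivec)\,e^{i\xvec_0\cdot\xivec}\Bigl(1+e^{i\frac{\pi}{\omega}\uvec\cdot\xivec}\Bigr)\,d\xivec .
\]
The idea is that the shift by half a period along $\uvec$ nearly flips the sign of each band-limited mode, so the factor $1+e^{i\theta(\xivec)}$, with $\theta(\xivec)=\frac{\pi}{\omega}\uvec\cdot\xivec$, should be small on the band.

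The key step is a pointwise bound on this factor. Write $\theta(\xivec)=\pi+\frac{\pi}{\omega}\uvec\cdot(\xivec-\omega\uvec)$. For $\xivec\in B(\omega\uvec,\delta)$, Cauchy--Schwarz gives $|\theta(\xivec)-\pi|\le \frac{\pi}{\omega}\|\xivec-\omega\uvec\|\le \frac{\pi\delta}{\omega}$. Then
\[
|1+e^{i\theta}|=|e^{i\theta}-e^{i\pi}|\le|\theta-\pi|,
\]
by the elementary inequality $|e^{ia}-e^{ib}|\le|a-b|$, which follows from $|e^{ia}-e^{ib}|=2|\sin\frac{a-b}{2}|$. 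Hence $|1+e^{i\theta(\xivec)}|\le \pi\delta/\omega$ uniformly on the band.

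To finish, move the norm inside the (vector-valued) integral, use $|e^{i\xvec_0\cdot\xivec}|=1$, and pull out the uniform factor:
\[
\|\nabla f_h(\xvec_0)+\nabla f_h(\xvec_1)\|\le \frac{1}{(2\pi)^d}\cdot\frac{\pi\delta}{\omega}\int_{B(\omega\uvec,\delta)}\|\xivec\|\,|\fhat(\xivec)|\,d\xivec=C_d\frac{\delta}{\omega}M,
\]
with $M$ and $C_d$ as in \eqref{eq:def_M_Cd}. No step is deep. The one point needing care is justifying the Fourier inversion and differentiation for $f_h$, i.e.\ that $f_h$ is indeed given pointwise by the inverse transform of $\fhat_h$ and is $C^1$. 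This follows from Assumption~\ref{assm:finitemass}: the band is bounded, so $\fhat_h$ and $\xivec\fhat_h$ are both integrable there, and dominated convergence applies. The remaining work is keeping the constant exactly $\pi/(2\pi)^d$.
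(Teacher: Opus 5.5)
Your proof is correct and follows the paper's argument step for step. It uses the same Fourier formula for $\nabla f_h$, the same factorization with $1+e^{i(\pi/\omega)\uvec\cdot\xivec}$, the same pointwise bound $\pi\delta/\omega$ on the band via $|1-e^{it}|\le|t|$, and the same final integration against $\|\xivec\|\,|\fhat(\xivec)|$ to obtain $C_d\frac{\delta}{\omega}M$. One small caveat on your regularity remark, a point the paper skips entirely: Assumption~\ref{assm:finitemass} together with boundedness of the band does not by itself make $\fhat_h$ integrable; you need either local integrability of $\fhat$, or $\omega>\delta$ so that $|\fhat|\le\|\xivec\|\,|\fhat|/(\omega-\delta)$ on the band.
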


\begin{restatable}{lemma}{gdboundtwopoints}[Lower bound of gradients.]\label{lemma:eq:den_lb_x1x0}
The gradient of $f$ at $\xvec_0, \xvec_1$ satisfies: 
\begin{small}
\begin{align}\label{eq:den_lb_x0}
\| \nabla f(\xvec_0) \| \ge (1-\varepsilon)L, 
\end{align}
\begin{equation}\label{eq:den_lb_x1}
     \|\nabla f(\xvec_1)\| \ge (1-\varepsilon)L - C_d\frac{\delta}{\omega}M.
\end{equation}
\end{small}
\end{restatable}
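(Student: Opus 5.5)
The plan is a triangle-inequality argument built on the decomposition $f=f_l+f_h$, which gives $\nabla f=\nabla f_l+\nabla f_h$ pointwise. Before splitting, I would note that both gradients are classical. Assumption~\ref{assm:finitemass} makes $\|\xivec\|\,|\fhat(\xivec)|$ integrable, hence so is its restriction to the band. Differentiation under the integral sign is therefore legitimate for $f_h$, giving $\nabla f_h(\xvec)=\frac{1}{(2\pi)^d}\int_{B(\omega\uvec,\delta)} i\xivec\,\fhat(\xivec)e^{i\xvec\cdot\xivec}\,d\xivec$. The same holds for $f_l=f-f_h$.

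\textbf{Bound at $\xvec_0$.} By the reverse triangle inequality,
\[
\|\nabla f(\xvec_0)\|\ge \|\nabla f_h(\xvec_0)\|-\|\nabla f_l(\xvec_0)\|.
\]
The first term is at least $L$ by \eqref{eq:assump_highgrad}. The second is at most $\varepsilon L$ by Assumption~\ref{assm:boundedgd}, since that assumption holds uniformly in $\xvec$. This yields \eqref{eq:den_lb_x0}.

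\textbf{Bound at $\xvec_1$.} Here I would use the same reverse triangle inequality, $\|\nabla f(\xvec_1)\|\ge \|\nabla f_h(\xvec_1)\|-\|\nabla f_l(\xvec_1)\|$. The low-frequency term is again bounded by $\varepsilon L$ via Assumption~\ref{assm:boundedgd}. For the high-frequency term, I would write $\nabla f_h(\xvec_1)=-\nabla f_h(\xvec_0)+\big(\nabla f_h(\xvec_0)+\nabla f_h(\xvec_1)\big)$. Lemma~\ref{lemma:eq:high_sum_bound} together with the triangle inequality then gives
\[
\|\nabla f_h(\xvec_1)\|\ge \|\nabla f_h(\xvec_0)\|-C_d\frac{\delta}{\omega}M\ge L-C_d\frac{\delta}{\omega}M.
\]
Combining the two pieces yields \eqref{eq:den_lb_x1}.

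There is essentially no obstacle once Lemma~\ref{lemma:eq:high_sum_bound} is assumed. The only delicate points are the justification that $\nabla f_l$ and $\nabla f_h$ exist pointwise, which is handled by the integrability remark above, and getting the direction of each triangle inequality right. In particular, the bound at $\xvec_1$ must go through $\xvec_0$ via the near-antisymmetry $\nabla f_h(\xvec_1)\approx-\nabla f_h(\xvec_0)$, because no lower bound on $\|\nabla f_h(\xvec_1)\|$ is assumed directly. If the bound in \eqref{eq:den_lb_x1} is negative, the statement is vacuous but still true.
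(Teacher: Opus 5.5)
Your proposal is correct and follows the paper's proof step for step. At $\xvec_0$ it uses the reverse triangle inequality with $\|\nabla f_h(\xvec_0)\|\ge L$ and Assumption~\ref{assm:boundedgd}; at $\xvec_1$ it first lower-bounds $\|\nabla f_h(\xvec_1)\|$ by $L-C_d\frac{\delta}{\omega}M$ via Lemma~\ref{lemma:eq:high_sum_bound} and then subtracts the $\varepsilon L$ low-frequency term, and your remark on differentiating under the integral matches the justification the paper gives in the proof of Theorem~\ref{thm:main-thm}.
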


\begin{assumption}[Dominance of high frequency gradient]\label{assm:positive-norm}
\begin{small}
\begin{equation}
    (1-\varepsilon)L > C_d\frac{\delta}{\omega}M, \textit{or, } L - C_d\frac{\delta}{\omega}M > \varepsilon L.
\end{equation}
\end{small}
\end{assumption}

\begin{remark}
The immediate purpose of Assumption~\ref{assm:positive-norm} is to ensure that the bounds in  \eqref{eq:den_lb_x1} and \eqref{eq:main_bound} are positive and non-vacuous. Beyond this technical requirement, the condition admits a clear interpretation. 

Imagine that if $f_h$ was a pure plane wave (i.e., sin/cos/function with a single frequency) of frequency $\omega\uvec$, we would have $\nabla f_h(\xvec_1)=-\nabla f_h(\xvec_0)$. However, signals are never pure --- they contain low-frequency background and have finite bandwidth. 
As shown in \eqref{eq:high_sum_bound}:
$\|\nabla f_h(\xvec_0)+\nabla f_h(\xvec_1)\| \le C_d\frac{\delta}{\omega}M,$
the term $C_d\frac{\delta}{\omega}M$ quantifies the maximal deviation from exact sign reversal due to the finite bandwidth. 

Therefore, Assumption~\ref{assm:positive-norm} ensures that even after the shift from $\xvec_0$ to $\xvec_1$, the high‑frequency oscillation remains the dominant feature of the gradient---strong enough to overcome both the low‑frequency “noise” and the distortion effect of the band. That is, $\|\nabla f_h\|$ is still large (at least $L - C_d\frac{\delta}{\omega}M$) and dominates $\|\nabla f_l\|$. One might better understand this by going through the proof of \eqref{eq:den_lb_x1}. 
\end{remark}

\begin{restatable}{theorem}{maintheorem}[Gradient reversal.]\label{thm:main-thm}
Under assumptions \ref{assm:finitemass}, \ref{assm:boundedgd}, $
\gvec_0\defeq \frac{\nabla f(\xvec_0)}{\|\nabla f(\xvec_0)\|}, 
\gvec_1\defeq \frac{\nabla f(\xvec_1)}{\|\nabla f(\xvec_1)\|}
$, we have

1) \begin{small}\begin{equation}\label{eq:main_bound}
\|\gvec_0+\gvec_1\|
\ \le\
\frac{2\Big(C_d\frac{\delta}{\omega}M + 2\varepsilon L\Big)}
{(1-\varepsilon)L - C_d\frac{\delta}{\omega}M}.
\end{equation}
\end{small}
2) In particular, whenever $\varepsilon\to 0$ and $\delta/\omega\to 0$ (with $M/L$ controlled), the right-hand side of \eqref{eq:main_bound} is
$O\!\left(\frac{\delta}{\omega}\frac{M}{L}+\varepsilon\right)$ and hence $\|\gvec_0+\gvec_1\|=o(1)$.

3) Moreover, for any $\eta\in(0,\pi)$, if the right-hand side of \eqref{eq:main_bound} is $\le 2\sin(\eta/2)$, then the angles of the two gradient vectors satisfy: $\angle(\gvec_0,\gvec_1)\ge \pi-\eta.$
\end{restatable}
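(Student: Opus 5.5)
The plan is to bound the sum of the normalized gradients using the elementary inequality for unit vectors: for nonzero $\mathbf a,\mathbf b$,
\[
\Big\|\tfrac{\mathbf a}{\|\mathbf a\|}+\tfrac{\mathbf b}{\|\mathbf b\|}\Big\|\le \frac{2\|\mathbf a+\mathbf b\|}{\min(\|\mathbf a\|,\|\mathbf b\|)}.
\]
To see it, assume without loss of generality $\|\mathbf b\|\le\|\mathbf a\|$ and write
\[
\frac{\mathbf a}{\|\mathbf a\|}+\frac{\mathbf b}{\|\mathbf b\|}=\frac{\mathbf a+\mathbf b}{\|\mathbf b\|}+\mathbf a\Big(\frac{1}{\|\mathbf a\|}-\frac{1}{\|\mathbf b\|}\Big).
\]
The second term has norm $(\|\mathbf a\|-\|\mathbf b\|)/\|\mathbf b\|$. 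Since $\|\mathbf a\|-\|\mathbf b\|\le\|\mathbf a+\mathbf b\|$ by the triangle inequality, both terms are at most $\|\mathbf a+\mathbf b\|/\|\mathbf b\|$.

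I apply this with $\mathbf a=\nabla f(\xvec_0)$ and $\mathbf b=\nabla f(\xvec_1)$. Both are nonzero by Lemma~\ref{lemma:eq:den_lb_x1x0} together with Assumption~\ref{assm:positive-norm}, which is implicitly needed for the denominator in \eqref{eq:main_bound} to be positive.

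\textbf{Numerator.} Split $f=f_l+f_h$:
\[
\|\nabla f(\xvec_0)+\nabla f(\xvec_1)\|\le\|\nabla f_h(\xvec_0)+\nabla f_h(\xvec_1)\|+\|\nabla f_l(\xvec_0)\|+\|\nabla f_l(\xvec_1)\|.
\]
Lemma~\ref{lemma:eq:high_sum_bound} bounds the first term by $C_d\frac{\delta}{\omega}M$. Assumption~\ref{assm:boundedgd} bounds each of the last two by $\varepsilon L$. Together this gives the factor $C_d\frac{\delta}{\omega}M+2\varepsilon L$.

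\textbf{Denominator.} By Lemma~\ref{lemma:eq:den_lb_x1x0}, the minimum of the two norms is at least $(1-\varepsilon)L-C_d\frac{\delta}{\omega}M$, which is the smaller of the two lower bounds. Combining numerator and denominator yields \eqref{eq:main_bound}, i.e.\ part 1. Assumption~\ref{assm:finitemass} enters only through the lemmas, ensuring $M<\infty$ and that the gradients are given by absolutely convergent Fourier integrals.

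\textbf{Part 2.} Divide numerator and denominator by $L$. The denominator becomes $1-\varepsilon-C_d\frac{\delta}{\omega}\frac{M}{L}$, which tends to $1$ when $\varepsilon\to0$, $\delta/\omega\to0$ and $M/L$ stays bounded; in particular it eventually exceeds $1/2$. The numerator becomes $2\big(C_d\frac{\delta}{\omega}\frac ML+2\varepsilon\big)$. Since $C_d$ is a fixed constant, the ratio is $O\big(\frac{\delta}{\omega}\frac ML+\varepsilon\big)$, and hence $o(1)$.

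\textbf{Part 3.} Let $\theta=\angle(\gvec_0,\gvec_1)\in[0,\pi]$. For unit vectors,
\[
\|\gvec_0+\gvec_1\|^2=2+2\cos\theta=4\cos^2(\theta/2), \qquad\text{so}\qquad \|\gvec_0+\gvec_1\|=2\cos(\theta/2).
\]
If the right-hand side of \eqref{eq:main_bound} is at most $2\sin(\eta/2)=2\cos\big((\pi-\eta)/2\big)$, then $\cos(\theta/2)\le\cos\big((\pi-\eta)/2\big)$. Because cosine is decreasing on $[0,\pi/2]$, this gives $\theta\ge\pi-\eta$.

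\textbf{Main obstacle.} The only delicate step is the normalization inequality, in particular using the correct minimum norm in the denominator, and making explicit that Assumption~\ref{assm:positive-norm} is required even though the theorem lists only Assumptions~\ref{assm:finitemass} and~\ref{assm:boundedgd}. The rest follows directly from the two lemmas.
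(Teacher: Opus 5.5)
Your proposal is correct and follows the paper's proof essentially step for step: the same normalization inequality applied to $\nabla f(\xvec_0)$ and $\nabla f(\xvec_1)$, the numerator bound from Lemma~\ref{lemma:eq:high_sum_bound} plus $2\varepsilon L$, the denominator bound from Lemma~\ref{lemma:eq:den_lb_x1x0}, and the identity $\|\gvec_0+\gvec_1\|=2\cos(\theta/2)$ for the angle claim. Two of your additions are correct and fill gaps the paper leaves open: you prove the elementary inequality, which the paper only states, and you note that Assumption~\ref{assm:positive-norm} is genuinely required, since otherwise the right-hand side of \eqref{eq:main_bound} could be negative and $\gvec_1$ might be undefined.
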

\emph{Proof sketch.} The left-hand side of \eqref{eq:main_bound} can be upper bounded using an elementary inequality $\left\|\frac{\avec}{\|\avec\|}+\frac{\bvec}{\|\bvec\|}\right\|
\le \frac{2\|\avec+\bvec\|}{\min(\|\avec\|,\|\bvec\|)}$, which reduces the problem to upper-bounding the numerator and lower-bounding the denominator. Upper bounding the numerator requires expressing the gradient of $f$ at a given point as a sum of gradients of $f_l$ and $f_h$, which relies on Lemma~\ref{lemma:eq:high_sum_bound}. 

\textbf{Interpretation of the theorem.}
The shift $\|\xvec_1-\xvec_0\|=\pi/\omega$ shows that in a high-frequency regime (large $\omega$), the two points can be close in Euclidean distance. 
At the same time, provided the low-frequency gradient contribution is small (small $\varepsilon$) and the band is narrow relative to its center frequency (small $\delta/\omega$, with $M/L$ controlled), the normalized gradients satisfy 
$\|\gvec_0+\gvec_1\| = O\!\left(\varepsilon + \frac{\delta}{\omega}\right)$,
and in particular $\|\gvec_0+\gvec_1\|=o(1)$ under these asymptotics.
Equivalently, the gradient directions at $\xvec_0$ and $\xvec_1$ become nearly opposite. 
This supports the use of the (normalized) gradient vector as a scale-invariant representation to distinguish points that lie in regions dominated by narrow-band high-frequency content.

\textbf{Intuitive \& simplified example.} Consider the function $\sin(\omega, \uvec^\top \xvec)$
where $\omega > 0$ denotes the frequency of the function. Consider two points: $\xvec_1 = \xvec$ and $\xvec_2 = \xvec + \frac{\pi}{\omega} \uvec$. The distance between these two points approaches zero as $\omega \to \infty$, since $\|\xvec_1 - \xvec_2\| = \frac{\pi}{\omega}.$
However, the gradients at these two points are opposite as long as $\cos(\omega, \uvec^\top \xvec) \neq 0$. 
Recognizing that functions encountered in practical machine learning settings are often far more complex than simple plane-wave functions such as sine or cosine, Theorem~\ref{thm:main-thm} provides a rigorous characterization of how close one gradient vector can be to the \emph{opposite} of another, as well as the conditions under which this occurs for general functions, rather than for simple sinusoidal cases.

\section{Synthetic Experiments}\label{sec:sinexp}
This section presents an empirical study on a one-dimensional, easily visualizable synthetic dataset. Despite its simplicity, the experimental design remains rigorous, enabling extensive hyperparameter sweeps and averaging over a sufficiently large number of random seeds. The setup is minimal yet nontrivial and captures key challenges encountered in realistic settings.

The goals are to: (i) verify the capability of gradient-based connections in approximating high-frequency functions; (ii) verify that the benefit comes from the gradient representation rather than from a trainable scaling factor; and (iii) ablate design choices such as using normalization or not and using second-order information or not. 

It should be noted that, this experiment does not challenge the benefits of standard residual connections in very deep networks. Rather, it contrasts their underlying motivations with those of gradient-based residual connections by comparing both in appropriately deep networks, where differences in high-frequency approximation become clearer.

\textbf{Setup.} We generate inputs $\xvec$ uniformly on $[-4\pi,4\pi]$ with $N$ samples. The ground-truth function is defined by
{\setlength{\abovedisplayskip}{4pt}
 \setlength{\belowdisplayskip}{4pt}
\begin{small}
\begin{equation}
y^\star(x)=
\begin{cases}
\sin(0.5x)+0.5\sin(2.5x), & x<0,\\
\sin(2x)+0.5\sin(7x), & x\ge 0.
\end{cases}
\end{equation}
\end{small}
}
We add Gaussian noise with standard deviation $0.1$ to obtain the training target $y=y^\star(x)+\varepsilon$, where $\varepsilon\sim\mathcal{N}(0,0.1^2)$. We generate $3000$ data points and train for $5000$ epochs. This experiment is intended to be minimal yet nontrivial. First, noise is added because practical data are typically noise-contaminated and the gradient vector in our approach may be sensitive to noise. Second, we train for sufficient number of epochs to avoid conclusions based solely on early training. We use a three-hidden-layer neural network, with the number of neurons per hidden layer $d\in\{16,32,64\}$. Training is performed using stochastic gradient descent with a mini-batch size of $512$. Due to the low dimensionality of this dataset, all gradient-based residual methods compute gradients with respect to the first hidden-layer representation rather than the raw input. Note that, unless otherwise specified, no gradient is further taken from $\sum_{i=1}^{d}\nabla \Fvec_i(\xvec)$ during training. 

\textbf{Algorithms and naming.} 
\textbf{Regular} refers to the baseline without any residual connection. 
\textbf{Grad residual} uses only the gradient-based connection
\(
\hvec_g(\xvec) \defeq \Fvec(\xvec) + \sum_{i=1}^{d}\nabla \Fvec_i(\xvec),
\)
without the conventional residual connection. 
\textbf{Standard residual} denotes the usual residual connection, including a variant with a \emph{trainable scalar} multiplying the identity branch, i.e., $\Fvec(\mathbf{x}) + \alpha \mathbf{x}$ where $\alpha$ is a trainable scalar. 
\textbf{Convex combined residual} is our main approach, defined in \eqref{eq-convexgdresidual}. 
\textbf{Addition residual} simply adds the standard residual and gradient residual, as in \eqref{eq-gdshortcut}. 
Finally, \textbf{Grad magnitude} computes the same gradient term \(\sum_{i=1}^{d}\nabla \Fvec_i(\xvec)\) but uses only its magnitude: the pre-output representation is formed by concatenating the final hidden-layer representation with the scalar
\(
\left\|\sum_{i=1}^{d}\nabla \Fvec_i(\xvec)\right\|_2,
\)
i.e., \(\mathrm{activation}\!\left[\hvec(\xvec),\ \left\|\sum_{i=1}^{d}\nabla \Fvec_i(\xvec)\right\|_2\right]\).
This variant is inspired by \citet{pan2020dynafreq}, as reviewed earlier.

Figure~\ref{fig:sinlc} shows how different algorithms perform as neural network capacity increases. There are several important observations and corresponding conclusions to draw.

\emph{The utility of gradient connection}. Under relatively limited neural network capacity ($d=16$), the variants with gradient residual (\textbf{Grad Residual} and \textbf{Convex-combined residual}) perform best, and are substantially better than the others. The variant that concatenates the gradient norm (\textbf{Grad Magnitude}) is slightly worse, but still better than the rest. However, as network capacity increases, the benefit of \textbf{Grad Magnitude} vanishes and may even become harmful, since its magnitude can vary significantly, whose variance may outweigh representational benefit. Second, the standard residual approach is significantly worse than using the gradient residual alone, even though use a trainable scalar to scale the residual connection. This clearly indicates that the gains come from leveraging additional gradient-based representation, rather than simply introducing a trainable scalar.

Third, standard residual connections can become harmful on this domain, as the shortcut may allow lower-level, less expressive representations to interfere with the final, more sophisticated representation. Indeed, when $d = 64$, we checked that the trainable scalar on the standard residual connection decreases from the order of $10^{-1}$ to $10^{-2}$ within the first $100$ epochs, indicating that the model tends to suppress this connection. This observation is also supported by the learning curve of \textbf{regular} in Figure \ref{fig:sinlc} (b)(c), which significantly outperform the standard residual connection without a trainable scalar in \textcolor{green}{\textbf{dashed}} green line. 

On the other hand, Figure \ref{fig:sinlc} (a-c) also suggests that as network capacity increases, the advantage of our method over the regular one gradually diminishes, which is expected since the model becomes less reliant on gradient-based information due to its own increasing expressive power. 

\emph{This observation suggests a potential tradeoff between a network’s ability to approximate high-frequency information and its depth.} In principle, increasing depth enhances representational capacity and may facilitate the approximation of complex, high-frequency functions. However, in practice, training very deep networks typically relies on standard residual skip connections to ensure optimization stability. These skip connections, while essential for successful training, may bias learning toward earlier or lower-frequency representations, thereby partially limiting the network’s ability to capture high-frequency information. This observation somewhat aligns with the findings from \citet{zhang2024residualharm}. 

\begin{figure*}[t]
  \centering
  \begin{subfigure}{\figwidththree}
    \centering
    \includegraphics[width=\linewidth]{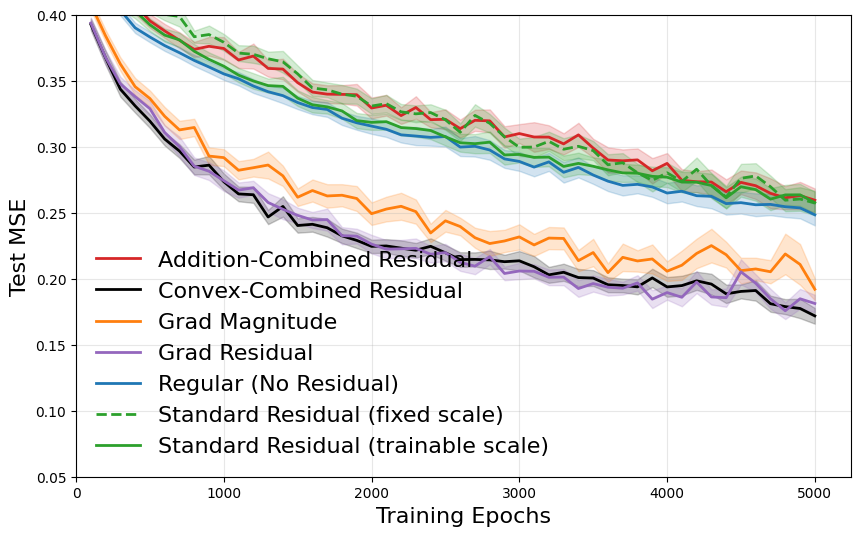}
    \caption{$d=16$}
  \end{subfigure}\hfill
  \begin{subfigure}{\figwidththree}
    \centering
    \includegraphics[width=\linewidth]{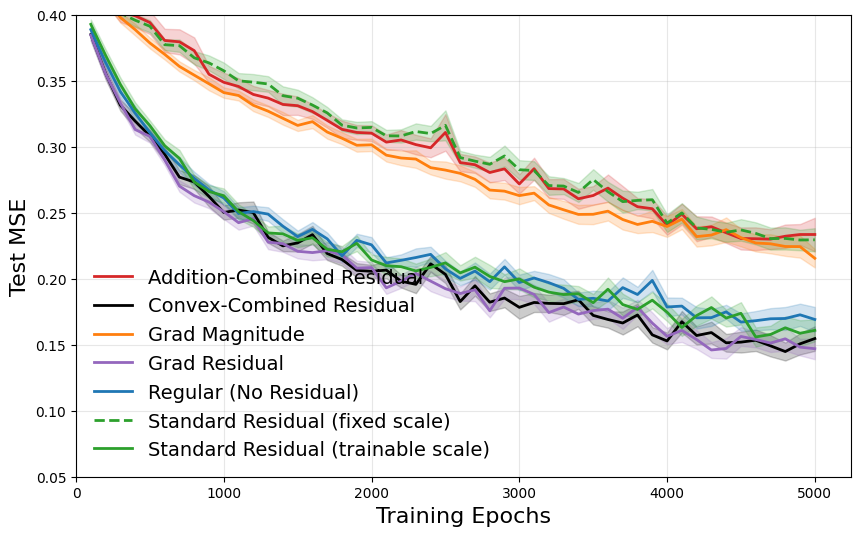}
    \caption{$d=32$}
  \end{subfigure}\hfill
  \begin{subfigure}{\figwidththree}
    \centering
    \includegraphics[width=\linewidth]{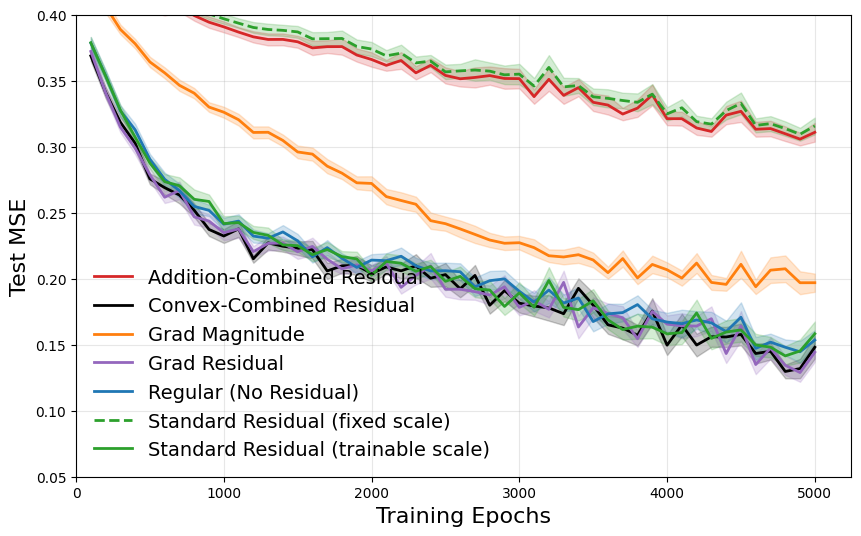}
    \caption{$d=64$}
  \end{subfigure}
  \caption{Learning curves on the sin dataset in terms of test MSE vs.\ number of training epochs. Results are averaged over $30$ random seeds. For standard residual, the solid line is the version with a trainable scalar on the residual connection.}
  \label{fig:sinlc}
\end{figure*}

\emph{What if backpropagating through the gradient.} As a sanity check, we also consider backpropagating through the gradient of $\sum_{i=1}^{d}\nabla \Fvec_i(\xvec)$, for which another gradient can be taken to train the parameters during optimization. We observe that using second-order gradients can lead to slightly better performance, as shown in Figure~\ref{fig:sinlc-sensi}(a). However, we do not adopt this approach due to the additional computational overhead and the potential instability it may introduce for more complicated neural network architectures.

\emph{Normalization or not.} We observe that when the number of hidden units is $16$, using the gradient residual alone can lead to highly unstable learning behavior, as shown in Figure~\ref{fig:sinlc-sensi}(b), which is expected. Anticipating the potential instability of the unnormalized variant, we therefore adopt the normalized gradient design throughout.

\begin{figure}[t]
  \begin{subfigure}{\figwidthfour}
    \centering
    \includegraphics[width=\linewidth]{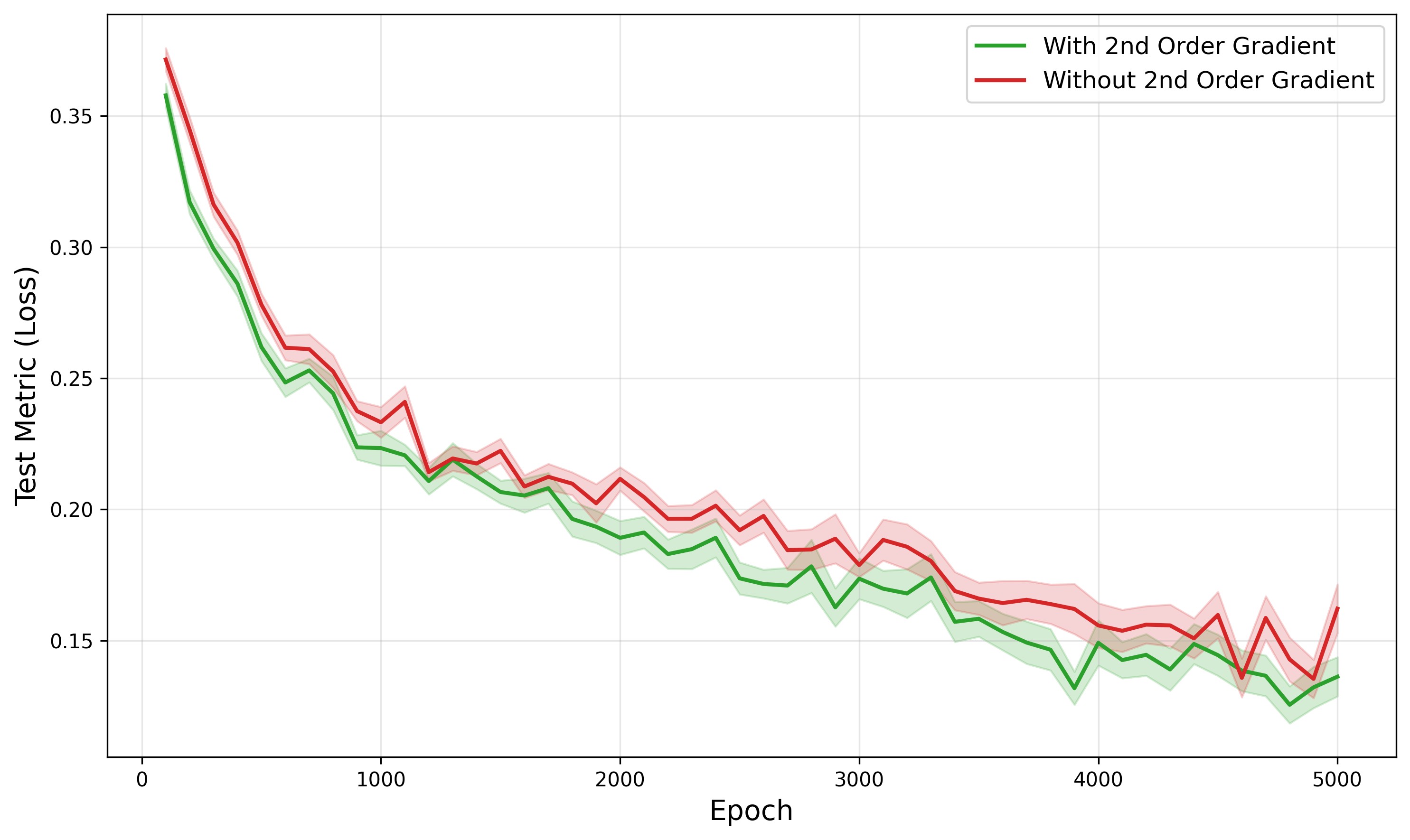}
    \caption{$d=64$}
  \end{subfigure}\hfill
  \begin{subfigure}{\figwidthfour}
    \centering
    \includegraphics[width=\linewidth]{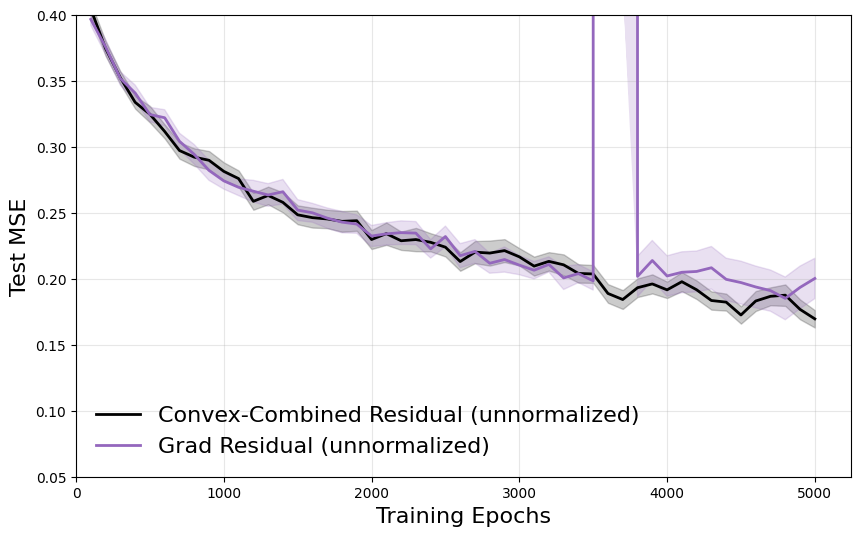}
    \caption{$d=16$}
  \end{subfigure}
  \caption{Learning curves on the sin dataset in terms of test Mean Squared Error (MSE) vs.\ number of training epochs. (a) shows those with/without backpropagating through the gradient. (b) shows those without normalizing the gradient. Results are averaged over $30$ runs.}
  \label{fig:sinlc-sensi}
  \vspace{-0.3cm}
\end{figure}


\emph{Visualizing the true/learned function.} Figure \ref{fig:sin-truevslearned} shows that the solid black curve (\textbf{Convex combined residual}) approximates the sharp spikes much more accurately. 

\begin{figure}[t]
  \centering
  \includegraphics[width=\figwidthtwo]{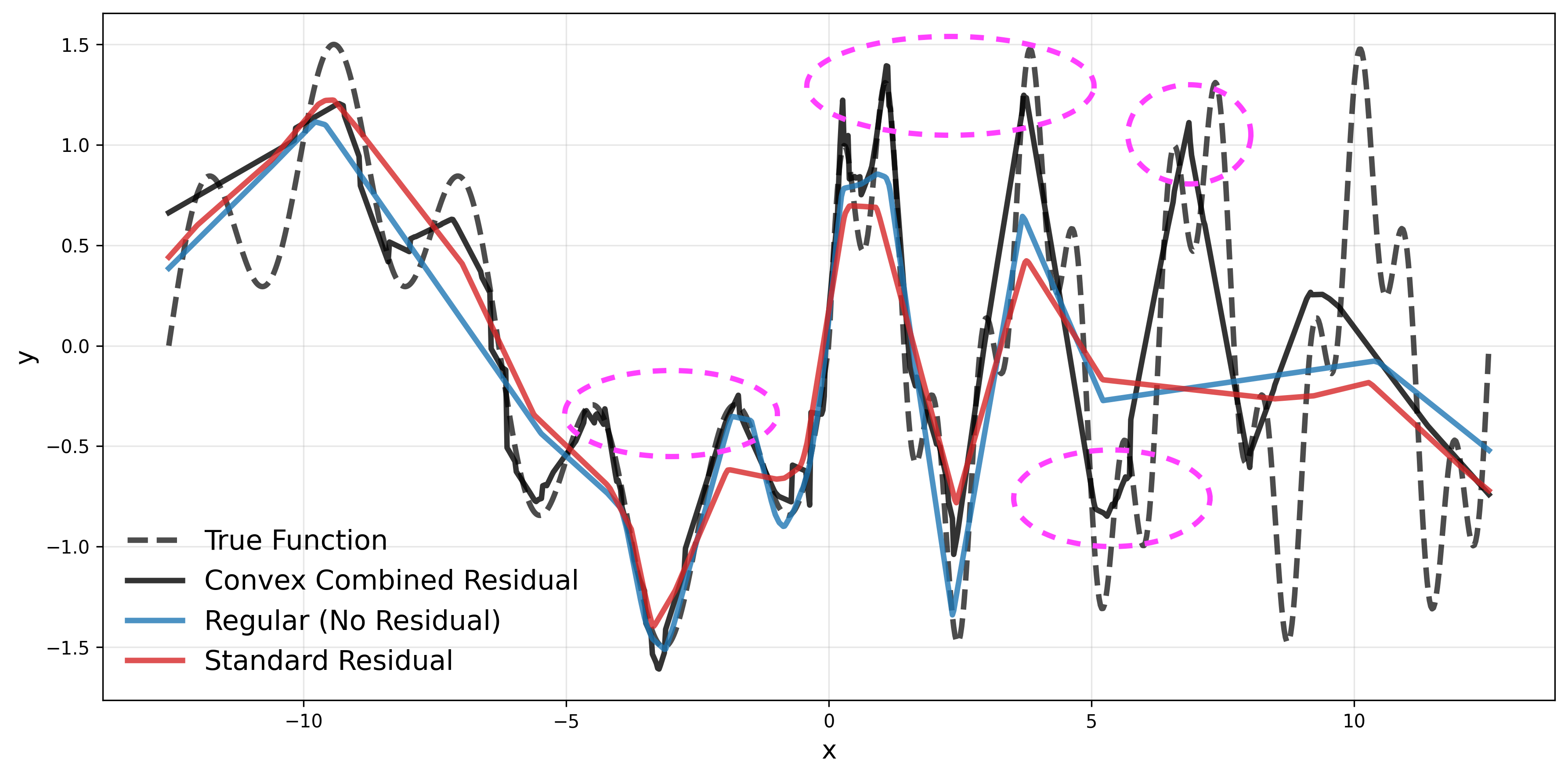}
  \caption{Learned functions vs. the ground-truth function when $d=16$. For each algorithm, we visualize results from the first random seed under the best hyperparameter setting, using models saved at the end of training. The region where our gradient residual method achieves substantially better approximation is highlighted by the \textcolor{purple}{purple circle}. For clarity of visualization, only the most relevant subset of algorithms is included.}
  \label{fig:sin-truevslearned}
  \vspace{-0.3cm}
\end{figure}

\section{Image Experiments}\label{sec:imageexp}
This section presents experiments on the super-resolution task as well as on standard image tasks, including classification and segmentation, to examine the practical utility, strengths, and limitations of the proposed method. 
Note that starting from this section, unless otherwise specified, algorithm names containing \textbf{GradResidual} refer to models using the convexly combined residual connection, which is the \textbf{only difference} from the base model. This design choice allows us to confidently observe and isolate its effect. All missing details and additional empirical results can be found in the Appendix \ref{sec:appendix-srexp} and \ref{sec:appendix-classify-seg}. 

\subsection{Super-resolution task}\label{sec:sr-exp}

We choose super-resolution because it requires learning a mapping from low-resolution inputs to high-resolution outputs, where small perturbations in the input can lead to large changes in fine details in the reconstructed image, which is conceptually similar to approximating a high frequency function. On this task, we aim to address the following questions: 1) are gradient residual connections beneficial on such task where the underlying ground truth might be high frequency; 2) how would different popular architectures or hyperparameter settings affect its performance? 

\textbf{Super-resolution task.} We use the DIV2K \cite{agustsson2017ntire} training dataset and evaluate the testing performance by Peak Signal-to-Noise Ratio (PSNR) on popular benchmark data sets: urban100, rsd100, set14, set5. We adopted three base neural network architectures. 
The first is a \textbf{simplified} version of EDSR with a reduced number of blocks \cite{lim2017edsr} (\textbf{SEDSR vs.\ SEDSR-GradResidual}), and it omits the specialized architectural optimizations typically introduced for super-resolution tasks. This base model provides a fair comparison setting, as it is unclear whether the heavily optimized components in the original EDSR architecture may interact with---or potentially obscure---the effects of gradient residual connections. 
The second more closely matches the original EDSR architecture (we simply name it as \textbf{EDSR vs.\ EDSR-GradResidual}) except that it uses $16$ blocks, which is a popular choice as a baseline. Note that EDSR has approximately $1.36$ million model parameters; in contrast, SEDSR consists of $8$ residual blocks and has approximately $0.6$ million trainable parameters. 
The third follows the architecture proposed in \cite{ledig2017srgan} (\textbf{SRResNN vs. SRResNN-GradResidual}). 
We do not devote substantial effort to exactly matching every architectural or implementation detail of these models to the original papers; however, we ensure that the only difference between each pair of algorithms is the replacement of the original residual connection with our gradient residual in each residual block, while keeping the global residual unchanged if any.

\textbf{SEDSR/EDSR-based results}. Table~\ref{tab:sedsr_edsr_final_metrics} reports the results in Peak Signal-to-Noise Ratio (PSNR) on different test sets when using the SEDSR and EDSR as base models. 
Figure~\ref{fig:sedsr} presents the corresponding learning curves for the SEDSR-based models, demonstrating a statistically significant advantage of the gradient residual connection. 

However, it should be noted that gradient residual connections typically incur higher computational cost. For example, on SEDSR, the standard model requires approximately $10.5$ ms per update, whereas the gradient-based variant requires about $23.7$ ms per update. Despite this increase in wall-clock time per update, the two approaches share the same asymptotic computational complexity. Notably, the smaller SEDSR-GradResidual model outperforms EDSR despite using nearly half the number of parameters, indicating that gradient residual connections can provide stronger fitting capacity with the same model size. Furthermore, scaling up the model size in EDSR-GradResidual does not yield additional gains over the SEDSR variant, which point to more effective utilization of model capacity. As data availability and computational resources continue to grow, such efficiency may become increasingly valuable, making gradient residual connections an attractive design choice for scalable architectures.

Another observation is that for SEDSR-GradResidual, the best-performing convex combination factor $\alpha$ is consistently selected as $3$, indicating a strong emphasis on the gradient representation. In contrast, when EDSR is used as the base model, the method selects $\alpha = -3$ as the optimal initialization for the convex combination factor. These results indicate that the gradient representation remains effective, although its relative importance may decrease as model capacity increases. 

\emph{The effect of mini-batch size.} Since gradient computation can be sensitive to noise, it is reasonable to conjecture that increasing the mini-batch size may amplify the benefits of gradient residual connections. Figure~\ref{fig:sedsr-batchsize} in the appendix shows that the advantages of gradient residual connections become more pronounced as the batch size increases, which is consistent with our conjecture. 

\begin{table}[ht]
\centering
\begin{tabular}{lcc}
\toprule
Dataset & SEDSR & SEDSR-GradResidual \\
\midrule
Urban100 & $29.81 \pm 0.01089$ & $\mathbf{30.06 \pm 0.008178}$ \\
BSD100   & $31.63 \pm 0.003525$ & $\mathbf{31.73 \pm 0.006626}$ \\
Set5     & $37.11 \pm 0.01430$ & $\mathbf{37.23 \pm 0.02093}$ \\
Set14    & $32.84 \pm 0.007602$ & $\mathbf{32.96 \pm 0.006835}$ \\
DIV2K    & $35.17 \pm 0.004458$ & $\mathbf{35.31 \pm 0.008841}$ \\
\bottomrule
\end{tabular}
\begin{tabular}{lcc}
\toprule
Dataset & EDSR & EDSR-GradResidual \\
\midrule
Urban100 & $29.93 \pm 0.03905$ & $\mathbf{29.99 \pm 0.006349}$ \\
BSD100   & $31.68 \pm 0.01952$ & $\mathbf{31.70 \pm 0.004340}$ \\
Set5     & $37.16 \pm 0.01827$ & $\mathbf{37.19 \pm 0.01344}$ \\
Set14    & $32.88 \pm 0.01642$ & $\mathbf{32.91 \pm 0.007583}$ \\
DIV2K    & $35.22 \pm 0.03180$ & $\mathbf{35.25 \pm 0.01306}$ \\
\bottomrule
\end{tabular}
\caption{Average final PSNR $\pm$ standard error. \emph{Final PSNR} is defined as averaging over the final $25\%$ evaluations before averaging over $3$ random seeds, hence it has a smaller standard error than the best metric (as shown in Appendix \ref{sec:appendix-srexp}). For all algorithms, the selected learning rate is $0.0003$ and SEDSR-GradResidual chooses $\alpha=3.0$. However, for the EDSR-GradResidual results, $\alpha=-3.0$ is chosen. 
}\label{tab:sedsr_edsr_final_metrics}
\vspace{-0.3cm}
\end{table}

\begin{figure*}[t]
\centering
\begin{subfigure}{\figwidthfive}
\centering
\includegraphics[width=\linewidth]{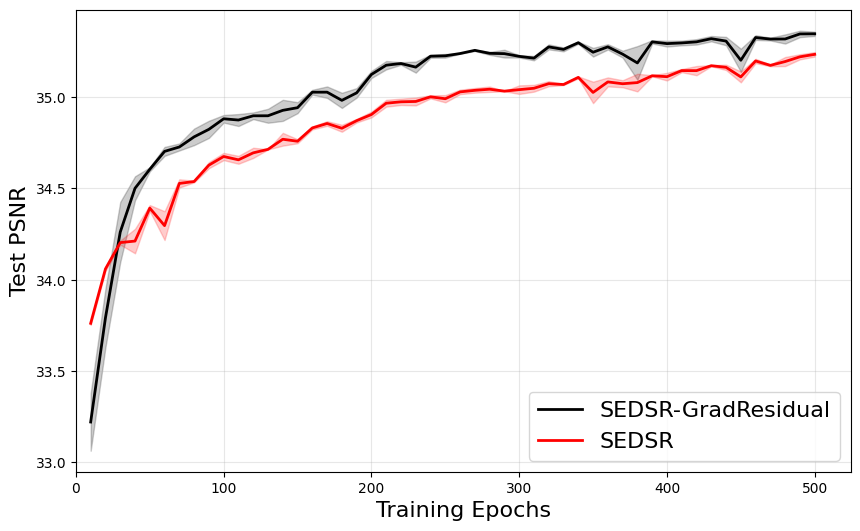}
\caption{DIV2K (validation)}
\end{subfigure}\hfill
\begin{subfigure}{\figwidthfive}
\centering
\includegraphics[width=\linewidth]{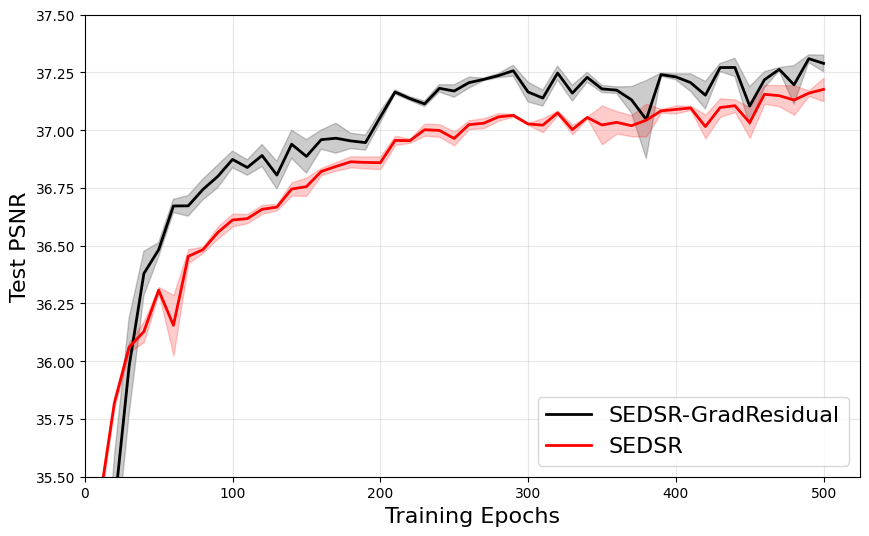}
\caption{Set5}
\end{subfigure}\hfill
\begin{subfigure}{\figwidthfive}
\centering
\includegraphics[width=\linewidth]{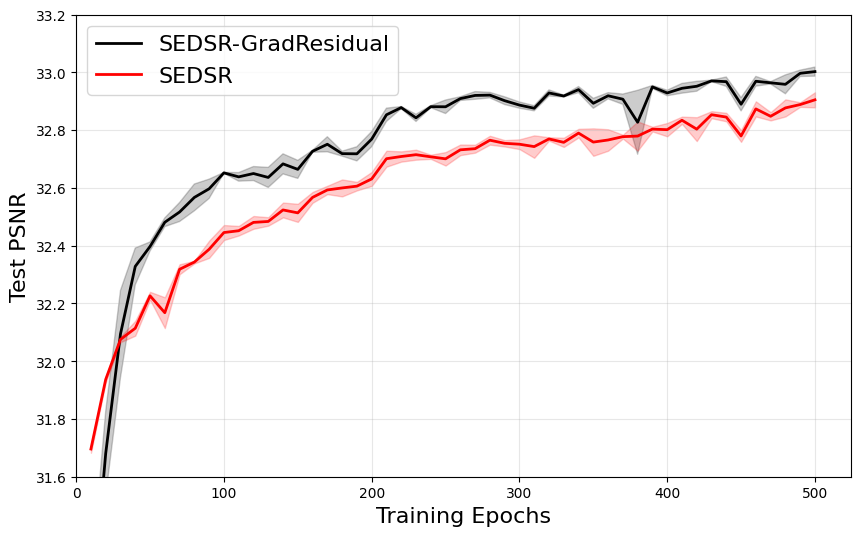}
\caption{Set14}
\end{subfigure}
\begin{subfigure}{\figwidthfive}
\centering
\includegraphics[width=\linewidth]{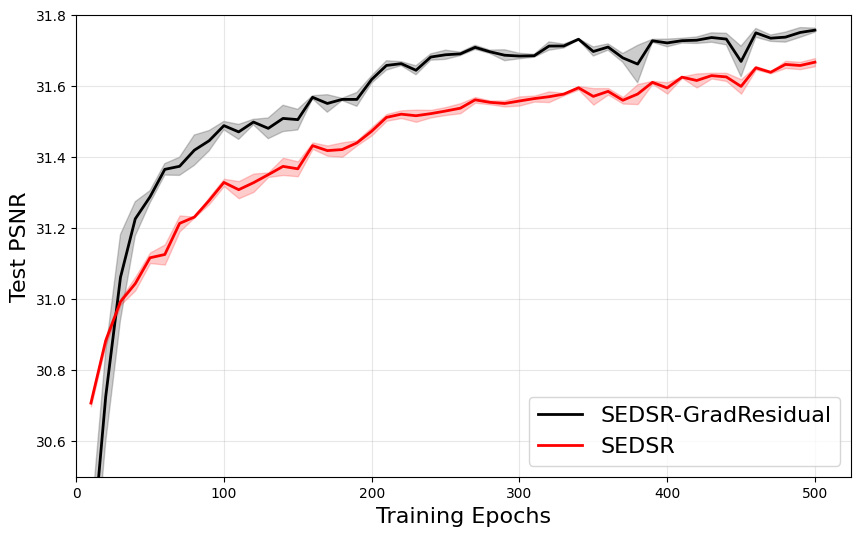}
\caption{BSD100}
\end{subfigure}\hfill
\begin{subfigure}{\figwidthfive}
\centering
\includegraphics[width=\linewidth]{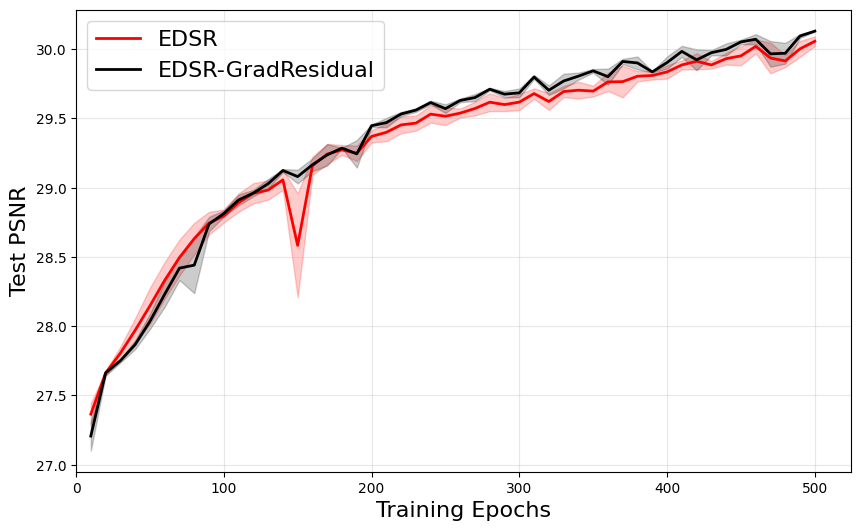}
\caption{Urban100}
\end{subfigure}

\caption{Learning curves of test performance measured by mean PSNR (mean over images) vs. training epochs on standard benchmarks. We train for 500 epochs in total and evaluate every 10 epochs. The results are averaged over 3 random seeds. 
}
\label{fig:sedsr}
\end{figure*}

SRResNet-based results. Table~\ref{tab:srresnet_final} and \ref{tab:srresnet_best} in Appendix \ref{sec:appendix-srexp} report the results. We observe that: (1) there is no significant difference between the gradient-residual variant and the baseline; (2) this architecture appears to be less effective than the previously evaluated SEDSR/EDSR models for both variants; and (3) batch normalization appears to be detrimental, both with and without gradient residuals.

Additional experiments. We also present two additional sets of experiments conducted on the three base architectures.
(1) Instead of using a convex combination, for the gradient residual variant we introduce two independent trainable scalars---one for the standard residual connection and one for the gradient residual. Correspondingly (see \eqref{eq:independent-resscale} and \eqref{eq:independent-resscale} in Appendix \ref{sec:appendix-variants-sr}), we introduce a single trainable scalar for the baseline standard residual model.
(2) Building on (1), we further introduce two additional layers to parameterize the scalar weights for the standard residual and the gradient residual, respectively, making the scaling sample-dependent (see \eqref{eq:sample-dependent-gradscale} and \eqref{eq:sample-dependent-resscale} in Appendix \ref{sec:appendix-variants-sr}).

For (1), we observe mostly no statistically significant differences between the baseline and GradResidual variants across all three base architectures. For (2), when using SEDSR and EDSR as base models, we again do not observe a significant difference between the gradient residual and standard residual variants. However, when SRResNet is used as the base model, we observe a significant improvement from employing the gradient residual over the standard residual, as shown in Table \ref{tab:srresnet_samplebasedscalar_final}. These mixed results may indicate that the underlying network architecture plays a role in the effectiveness of gradient residuals, as those existing models are typically optimized when standard residual connections are used.

\begin{table}[ht]
\centering
\vspace{-0.3cm}
\begin{tabular}{lcc}
\toprule
Dataset & SRResNet & SRResNet-GradResidual \\
\midrule
Urban100 & $27.58 \pm 0.09445$ & $\mathbf{28.27 \pm 0.1136}$ \\
BSD100   & $30.40 \pm 0.1267$  & $\mathbf{30.84 \pm 0.04787}$ \\
Set5     & $34.28 \pm 0.4258$  & $\mathbf{35.57 \pm 0.1156}$ \\
Set14    & $31.25 \pm 0.1464$  & $\mathbf{31.53 \pm 0.03792}$ \\
DIV2K    & $33.16 \pm 0.2742$  & $\mathbf{33.80 \pm 0.04704}$ \\
\bottomrule
\end{tabular}
\caption{Final PSNR for SRResNet with sample-based scaling. The results are averaged over 3 random seeds.}
\label{tab:srresnet_samplebasedscalar_final}
\vspace{-0.3cm}
\end{table}

\subsection{Classification and Segmentation}\label{sec:classify-seg}

In general, we do not observe significant performance differences between models with and without gradient residual connections on standard image classification or semantic segmentation tasks. This behavior is intuitively reasonable, as such tasks are primarily driven by low-frequency semantic information that determines category-level decisions. Consequently, the underlying functions being approximated are relatively smooth and are less dominated by high-frequency components, reducing the potential benefit of explicitly enhancing gradient-based residual pathways. Additional details and results are in Appendix \ref{sec:appendix-classify-seg}. 

\section{Discussions}\label{sec:discussion}
This work proposes gradient-based residual connection that leverages gradient vectors computed from later-layer representations with respect to earlier-layer representations, with the goal of enhancing a neural network’s ability to approximate high-frequency target function. We provide intuitive theoretical insights into why such gradient information can be effective for separating data points in high-frequency regions. We conduct a series of controlled synthetic experiments to systematically investigate implementation choices and validate the effectiveness of the proposed approach. In addition, we demonstrate the practical utility and different implementation choices of gradient residual connections on super-resolution tasks, where some variants yield significant performance improvements, and show that the method remains competitive on conventional image classification and semantic segmentation tasks.

\textbf{Limitations and Future Work}. This work has limited exploration of their interactions with other training strategies, such as optimization algorithms, or regularization techniques that aim to reduce input gradients in order to improve robustness \cite{ross2017inputgradreg,drucker1992doublebackprop}. In addition, our experiments do not consider scenarios in which the training and testing data may exhibit significantly different frequency characteristics. While this direction is potentially interesting, many domain shift problems are primarily concerned with changes in data distributions rather than explicit frequency shifts. Finally, our focus is on isolating and evaluating the utility of gradient residual connections relative to conventional residual connections. Hence, we did not explore architectural optimizations specifically tailored to gradient residual. In particular, we did not investigate where such connections should originate or terminate, nor how they interact with existing architectural components such as upsampling and downsampling operations, global skip connections, or other popular designs. Since most existing neural network architectures have been extensively optimized for standard residual connections, they may be suboptimal when directly adapted to gradient-based residuals. A systematic exploration of architectures better suited to gradient residual connections would likely require substantial engineering effort and is beyond the scope of the present study.

\newpage
\section*{Impact Statement}
This paper presents work aimed at advancing the field of Machine Learning. There are many potential societal consequences of our work, none
which we feel must be specifically highlighted here.
\bibliography{ref}
\bibliographystyle{icml2026}

\newpage
\appendix
\onecolumn
\section{Appendix}
The appendix includes the following contents. 

\begin{enumerate}
    \item Section \ref{sec:appendix-theory}: Proof for Theorem \ref{thm:main-thm} and its associated lemmas.
    \item Section \ref{sec:appendix-synthetic}: Additional details for Section \ref{sec:sinexp}.
    \item Section \ref{sec:appendix-srexp} and \ref{sec:appendix-variants-sr}: Additional details and results for super resolution task from Section \ref{sec:sr-exp}. 
    \item Section \ref{sec:appendix-classify-seg} Additional details for Section \ref{sec:classify-seg}. 
    \item Section \ref{sec:appendix-relatedwork}: Additional literature review on the connections between the frequency perspective and machine learning, as well as additional work on residual neural networks. 
\end{enumerate}

\subsection{Proof}\label{sec:appendix-theory}

\gdboundtwopoints*

\begin{proof}
To show \eqref{eq:den_lb_x0}
\begin{align}
   \| \nabla f(\xvec_0) \| &= \| \nabla f_h(\xvec_0) + \nabla f_l(\xvec_0) \| \\
    &\ge \|\nabla f_h(\xvec_0)\| - \|\nabla f_l(\xvec_0)\| \ge (1-\varepsilon)L, 
\end{align}
where the first inequality is a basic property of norm and the second one is due to $\|\nabla f_h(\xvec_0)\| \ge L$ by definition and $\|\nabla f_l(\xvec_0)\|\le \varepsilon L$ by Assumption \ref{assm:boundedgd}. 

The gradient of $f_h$ at $\xvec_1$ satisfies:
\begin{align}
\|\nabla f_h(\xvec_1)\|
\ge \|\nabla f_h(\xvec_0)\|-\|\nabla f_h(\xvec_0)+\nabla f_h(\xvec_1)\|
\ge L - C_d\frac{\delta}{\omega}M.
\end{align}
Then,
\begin{align}
\|\nabla f(\xvec_1)\| &=\|\nabla f_h(\xvec_1)+\nabla f_l(\xvec_1)\| \\
&\ge \|\nabla f_h(\xvec_1)\|-\|\nabla f_l(\xvec_1)\| \\
&\ge \left(L - C_d\frac{\delta}{\omega}M\right)-\varepsilon L \\
&=(1-\varepsilon)L - C_d\frac{\delta}{\omega}M.
\end{align}
\end{proof}

\highgdsumbound*

\begin{proof}

By definition, $\xvec_1=\xvec_0+\frac{\pi}{\omega}\uvec$. Using \eqref{eq:grad_fourier},
\begin{align}
&\nabla f_h(\xvec_0)+\nabla f_h(\xvec_1)\\
&=\frac{1}{(2\pi)^d}\int_{B(\omega\uvec,\delta)} i\xivec\,\fhat_h(\xivec)
\left(e^{i\xvec_0\cdot\xivec} + e^{i\xvec_1\cdot\xivec}\right)\,d\xivec \nonumber\\
&=\frac{1}{(2\pi)^d}\int_{B(\omega\uvec,\delta)} i\xivec\,\fhat_h(\xivec)\,e^{i\xvec_0\cdot\xivec}
\left(1 + e^{i(\pi/\omega)\uvec\cdot\xivec}\right)\,d\xivec.
\label{eq:sum_high_grad}
\end{align}

For $\xivec\in B(\omega\uvec,\delta)$, write $\xivec=\omega\uvec+\rvec$ with $\|\rvec\|\le \delta$. Then
\begin{align}
(\pi/\omega)\uvec\cdot\xivec
&=(\pi/\omega)\uvec\cdot(\omega\uvec+\rvec) =\pi + m(\xivec), \\
\text{where} \quad m(\xivec) &\defeq (\pi/\omega)\uvec\cdot\rvec, \\
|m(\xivec)| &\le \pi\delta/\omega.
\end{align}
Hence
\[
1 + e^{i(\pi/\omega)\uvec\cdot\xivec}
=1+e^{i(\pi+m(\xivec))}
=1-e^{im(\xivec)},
\]
as $e^{i x} = cos (x) + i sin (x)$ by Euler's formulae. 

Using $|1-e^{it}|\le |t|$ for all $t\in\RR$,
\[
\left|1 + e^{i(\pi/\omega)\uvec\cdot\xivec}\right|
=\left|1-e^{im(\xivec)}\right|
\le |m(\xivec)|
\le \pi\delta/\omega.
\]
Taking norm of \eqref{eq:sum_high_grad} and applying the above,
\begin{align}
& \|\nabla f_h(\xvec_0)+\nabla f_h(\xvec_1)\| \\
&\le \frac{1}{(2\pi)^d}\int_{B(\omega\uvec,\delta)} \|\xivec\|\,|\fhat_h(\xivec)|
\left|1 + e^{i(\pi/\omega)\uvec\cdot\xivec}\right|\,d\xivec \nonumber\\
&\le \frac{1}{(2\pi)^d}\frac{\pi\delta}{\omega}\int_{B(\omega\uvec,\delta)} \|\xivec\|\,|\fhat_h(\xivec)|\,d\xivec.
\end{align}
Since $|\fhat_h|\le |\fhat|$ and $\fhat_h$ is supported on $B(\omega\uvec,\delta)$, the last integral is bounded by $M$ in \eqref{eq:def_M_Cd}. Therefore
\begin{equation}
\|\nabla f_h(\xvec_0)+\nabla f_h(\xvec_1)\|
\le C_d\frac{\delta}{\omega}M.
\end{equation}

\end{proof}

\maintheorem*

\begin{proof}
We use the Fourier conventions stated in the theorem:
\[
\fhat(\xivec)
=\int_{\RR^d} f(\xvec)e^{-i\xvec\cdot\xivec}\,d\xvec,
\quad
f(\xvec)
=\frac{1}{(2\pi)^d}\int_{\RR^d}\fhat(\xivec)e^{i\xvec\cdot\xivec}\,d\xivec.
\]
By \eqref{eq:assump_moment1} and the definition of $f_h$, we may differentiate under the integral on the band to obtain
\begin{equation}\label{eq:grad_fourier}
\nabla f_h(\xvec)
=\frac{1}{(2\pi)^d}\int_{B(\omega\uvec,\delta)} i\xivec\,\fhat_h(\xivec)\,e^{i\xvec\cdot\xivec}\,d\xivec.
\end{equation}

\paragraph{Bound the sum of gradient in high frequency components.}
By Lemma \ref{lemma:eq:high_sum_bound}, we know
\[
\|\nabla f_h(\xvec_0)+\nabla f_h(\xvec_1)\|
\le C_d\frac{\delta}{\omega}M.
\]

\paragraph{Include the full gradient.}
Since $f=f_l+f_h$,
\[
\nabla f(\xvec_0)+\nabla f(\xvec_1)
=\big(\nabla f_h(\xvec_0)+\nabla f_h(\xvec_1)\big)
+\big(\nabla f_l(\xvec_0)+\nabla f_l(\xvec_1)\big).
\]
Hence, by the triangle inequality and \eqref{eq:assump_lowgrad},
\begin{align}\label{eq:full_sum_bound}
& \|\nabla f(\xvec_0)+\nabla f(\xvec_1)\| \\
& \le \|\nabla f_h(\xvec_0)+\nabla f_h(\xvec_1)\| + \|\nabla f_l(\xvec_0)\|+\|\nabla f_l(\xvec_1)\| \\
&\le C_d\frac{\delta}{\omega}M + 2\varepsilon L.
\end{align}

\paragraph{Normalization and the bound on $\|\gvec_0+\gvec_1\|$.}
We use the elementary inequality valid for any nonzero vectors $\avec,\bvec$:
\begin{equation}\label{eq:unit_sum_ineq}
\left\|\frac{\avec}{\|\avec\|}+\frac{\bvec}{\|\bvec\|}\right\|
\le \frac{2\|\avec+\bvec\|}{\min(\|\avec\|,\|\bvec\|)}.
\end{equation}
Applying \eqref{eq:unit_sum_ineq} with $\avec=\nabla f(\xvec_0)$ and $\bvec=\nabla f(\xvec_1)$ gives
\begin{equation}\label{eq:g_sum_via_full_sum}
\|\gvec_0+\gvec_1\|
\le \frac{2\|\nabla f(\xvec_0)+\nabla f(\xvec_1)\|}
{\min(\|\nabla f(\xvec_0)\|,\|\nabla f(\xvec_1)\|)}.
\end{equation}

By Lemma \ref{lemma:eq:den_lb_x1x0}, we can bound the denominator
\begin{equation}\label{eq:denominator-min-bound}
\min(\|\nabla f(\xvec_0)\|,\|\nabla f(\xvec_1)\|)\ge (1-\varepsilon)L - C_d\frac{\delta}{\omega}M, 
\end{equation}
since $\|\nabla f(\xvec_0)\| \ge (1-\varepsilon)L$ and $\|\nabla f(\xvec_1)\| \ge (1-\varepsilon)L - C_d\frac{\delta}{\omega}M$. 

Combining \eqref{eq:full_sum_bound}, \eqref{eq:g_sum_via_full_sum}, and \eqref{eq:denominator-min-bound}, we obtain
\[
\|\gvec_0+\gvec_1\|
\le
\frac{2\Big(C_d\frac{\delta}{\omega}M + 2\varepsilon L\Big)}
{(1-\varepsilon)L - C_d\frac{\delta}{\omega}M},
\]
which is \eqref{eq:main_bound}.

\paragraph{Asymptotics and angle consequence.}
From \eqref{eq:main_bound}, if $\varepsilon\to 0$ and $\frac{\delta}{\omega} \to 0$ (with $M/L$ controlled), then the right-hand side is
$O\!\left(\frac{\delta}{\omega}\frac{M}{L}+\varepsilon\right)$, hence $\|\gvec_0+\gvec_1\|=o(1)$.

Finally, since $\gvec_0,\gvec_1$ are unit vectors and $\theta=\angle(\gvec_0,\gvec_1)\in[0,\pi]$,
\[
\|\gvec_0+\gvec_1\| = 2\cos(\theta/2).
\]
Thus if the right-hand side of \eqref{eq:main_bound} is $\le 2\sin(\eta/2)$, then $\cos(\theta/2)\le \sin(\eta/2)=\cos((\pi-\eta)/2)$, which implies $\theta\ge \pi-\eta$.
\end{proof}

\subsection{Synthetic experiments}\label{sec:appendix-synthetic}

For all the experiments on the synthetic sin datasets, the learning rate is swept over $\{0.125, 0.03125, 0.0078125, 0.001953125\}$ for all algorithms. For the standard residual implementation, the trainable scalar $\alpha$ in $\mathbf{F}(\mathbf{x}) + \alpha \mathbf{x}$ is initialized to $1.0 / \sqrt{d}$, where $d$ denotes the network width. This heuristic is motivated by the observation that, on this dataset, wider neural networks tend to be more adversely affected by residual connections. Below we supplement one more results on studying the sensitivity w.r.t. the convex combination scalar (i.e., $\alpha$) initialization. 

\emph{Sensitivity w.r.t.\ scalar initialization.}
To further validate the utility of the convex-combination design, we visualize in Figure~\ref{fig:sinlc-init} how performance changes under different initializations of the scalar $\alpha$ in the convex combination \eqref{eq-convexgdresidual}. Since the scalar is passed through a sigmoid, the values $-3$ and $3$ are already extreme, corresponding to regions where the sigmoid is close to saturation. Initializing at $3$ places most weight on the gradient residual branch, and the resulting performance remains strong, which further supports the usefulness of the gradient representation. Practically, this also simplifies hyperparameter selection: it is often sufficient to sweep only two values such as $-3$ and $3$ indicating potential inductive bias towards the low frequency region or high frequency region given the task, or to choose based on prior knowledge---for instance, using $2$ or $3$ when the target function is expected to be high-frequency. 

\begin{figure}[t]
  \centering
  \includegraphics[width=\figwidthtwo]{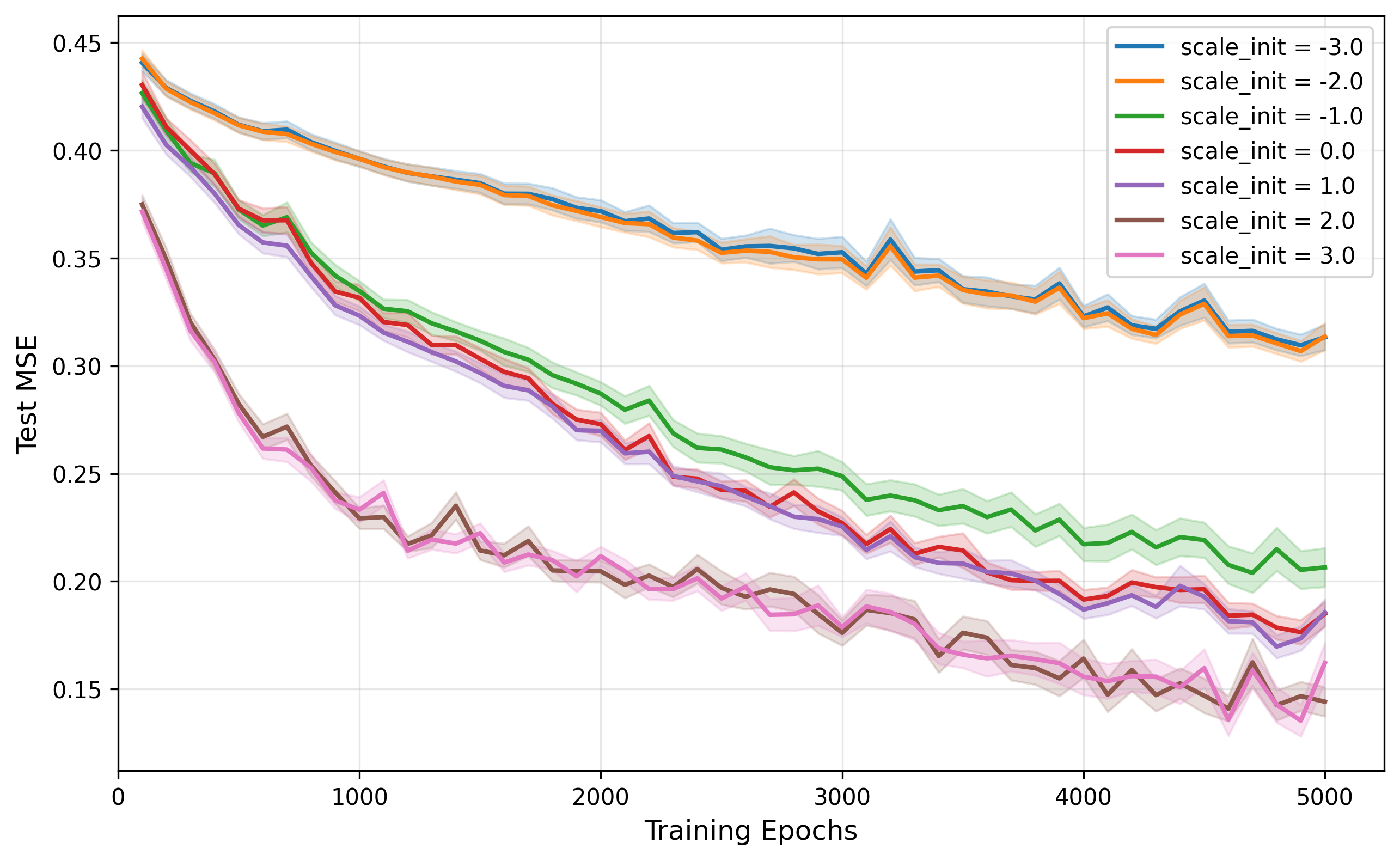}
  \caption{Learning curves (testing MSE v.s. training epochs) when using different initialization scalars for our approach. Results are averaged over 30 runs/random seeds.}
  \label{fig:sinlc-init}
\end{figure}

\subsection{Super resolution experiments}\label{sec:appendix-srexp}

\textbf{Dataset and basic setup.} We use the DIV2K dataset for $\times 2$ single-image super-resolution (SR). Each sample consists of a low-resolution (LR) RGB image tensor and its aligned high-resolution (HR) target of size $3 \times (2H) \times (2W)$, where $H$ and $W$ vary across images. Pixel values are converted to tensors in $[0,1]$. During training, we extract aligned random crops: we use $48 \times 48$ patches from the LR image and the corresponding $96 \times 96$ patches from the HR image.The HR patch size is scaled by the upsampling factor. We train using mean squared error (MSE) loss between the predicted and ground-truth HR images. 

For validation and benchmark evaluation, images are processed at full resolution (i.e. without cropping), with standard \emph{modcrop} alignment to ensure that the image dimensions are divisible by the scale factor. We additionally evaluate on standard SR benchmark datasets, including Set5, Set14, BSD100, and Urban100, when available. Performance is reported using peak signal-to-noise ratio (PSNR, in dB), computed on the luminance (Y) channel (MATLAB-style RGB$\rightarrow$Y conversion), with border shaving equal to the scale factor, following common SR practice. For all algorithms, we sweep over learning rates in ${3\times10^{-4}, 1\times10^{-4}}$ using the Adam optimizer. We employ a step decay learning rate schedule, halving the learning rate once (scaling by $0.5$) at epoch $200$ over a total of $500$ training epochs. For gradient-based residual, the initialization of $\alpha$ is from $\{-3, 3\}$. When reading results, one should be aware that \textbf{final evaluation (i.e. average final PSNR)} means averaged over $25\%$ evaluations first before averaging over random seeds, while \textbf{Peak/best evaluation} means reporting the best evaluation during training after averaging over random seeds. Training batch size is $32$ unless otherwise specified. Three random seeds are used on all SR tasks. 

\textbf{Additional results supplementary to Table \ref{tab:sedsr_edsr_final_metrics}.} Tables \ref{tab:sedsr_best_eval_metrics} and \ref{tab:edsr_best_eval_metrics} show the best evaluation results during the training stage, as a supplement to the final evaluation (\ref{tab:sedsr_edsr_final_metrics}) shown in the main body. 

\begin{table}[ht]
\centering
\begin{tabular}{lcc}
\toprule
Dataset & SEDSR & SEDSR-GradResidual \\
\midrule
Urban100 & $29.91 \pm 0.03592$  & $\mathbf{30.14 \pm 0.02891}$ \\
BSD100   & $31.67 \pm 0.01146$  & $\mathbf{31.76 \pm 0.005760}$ \\
Set5     & $37.18 \pm 0.05099$  & $\mathbf{37.31 \pm 0.01679}$ \\
Set14    & $32.91 \pm 0.02619$  & $\mathbf{33.00 \pm 0.01582}$  \\
DIV2K    & $35.23 \pm 0.01241$  & $\mathbf{35.35 \pm 0.01184}$  \\
\bottomrule
\end{tabular}
\caption{Best PSNR $\pm$ std.\ error. Best PSNR is recorded during the training stage, after averaging over 3 random seeds.}
\label{tab:sedsr_best_eval_metrics}
\end{table}

\begin{table}[ht]
\centering
\begin{tabular}{lcc}
\toprule
Dataset & EDSR & EDSR-{\textbf{GradResidual}} \\
\midrule
Urban100 & $30.06 \pm 0.03545$ & $\mathbf{30.13 \pm 0.002999}$ \\
BSD100   & $31.73 \pm 0.02244$ & $\mathbf{31.76 \pm 0.008334}$ \\
Set5     & $37.27 \pm 0.04251$ & $\mathbf{37.29 \pm 0.01640}$ \\
Set14    & $32.97 \pm 0.02107$ & $\mathbf{33.00 \pm 0.01405}$ \\
DIV2K    & $35.32 \pm 0.03509$ & $\mathbf{35.36 \pm 0.01469}$ \\
\bottomrule
\end{tabular}
\caption{Best PSNR $\pm$ std.\ error. Number of random seeds = 3. For EDSR-GradResidual, the best $\alpha=-3.0$.}
\label{tab:edsr_best_eval_metrics}
\end{table}

\textbf{Additional results on comparing the effect of mini-batch size.} Figure \ref{fig:sedsr-batchsize} shows the learning curves when using mini-batch size $=16$ and $32$ when using SEDSR as a base architecture. As one can see that when batch size is $32$, the advantage of gradient residuals look more obvious. 

\begin{figure*}[t]
\centering
\begin{subfigure}{\figwidthfive}
\centering
\includegraphics[width=\linewidth]{figures/figures_sr_all/srnetsimple_PSNR_div2k_32_lc.png}
\caption{DIV2K}
\end{subfigure}\hfill
\begin{subfigure}{\figwidthfive}
\centering
\includegraphics[width=\linewidth]{figures/figures_sr_all/srnetsimple_PSNR_set5_div2k_32_lc.png}
\caption{Set5}
\end{subfigure}\hfill
\begin{subfigure}{\figwidthfive}
\centering
\includegraphics[width=\linewidth]{figures/figures_sr_all/srnetsimple_PSNR_set14_div2k_32_lc.png}
\caption{Set14}
\end{subfigure}
\begin{subfigure}{\figwidthfive}
\centering
\includegraphics[width=\linewidth]{figures/figures_sr_all/srnetsimple_PSNR_bsd100_div2k_32_lc.png}
\caption{BSD100}
\end{subfigure}\hfill
\begin{subfigure}{\figwidthfive}
\centering
\includegraphics[width=\linewidth]{figures/figures_sr_all/edsr_PSNR_urban100_div2k_32_lc.png}
\caption{Urban100}
\end{subfigure}

\begin{subfigure}{\figwidthfive}
\centering
\includegraphics[width=\linewidth]{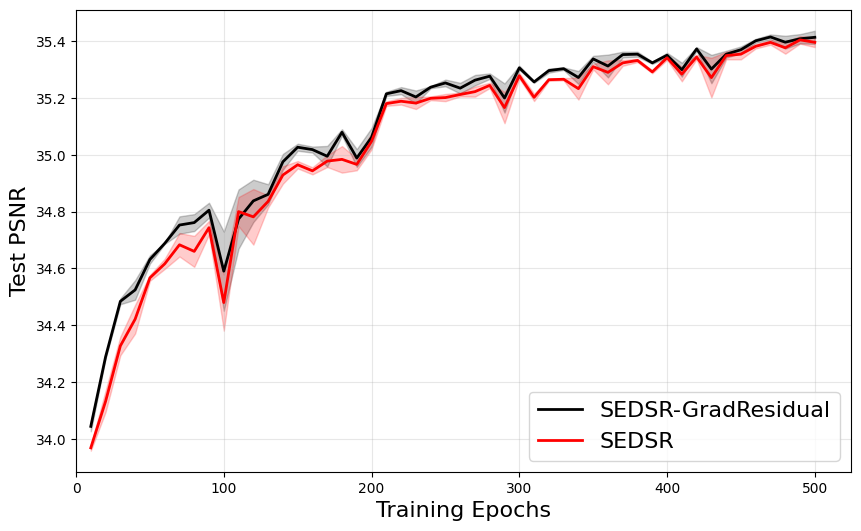}
\caption{DIV2K}
\end{subfigure}\hfill
\begin{subfigure}{\figwidthfive}
\centering
\includegraphics[width=\linewidth]{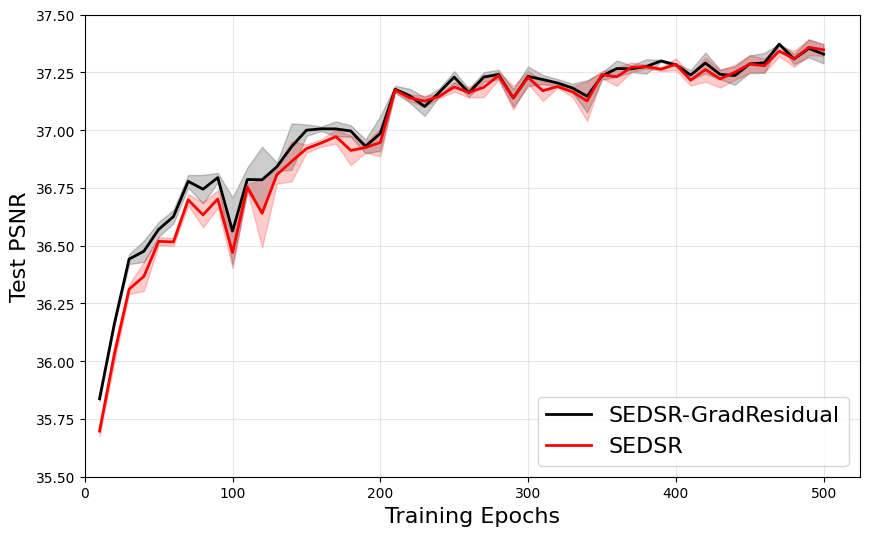}
\caption{Set5}
\end{subfigure}\hfill
\begin{subfigure}{\figwidthfive}
\centering
\includegraphics[width=\linewidth]{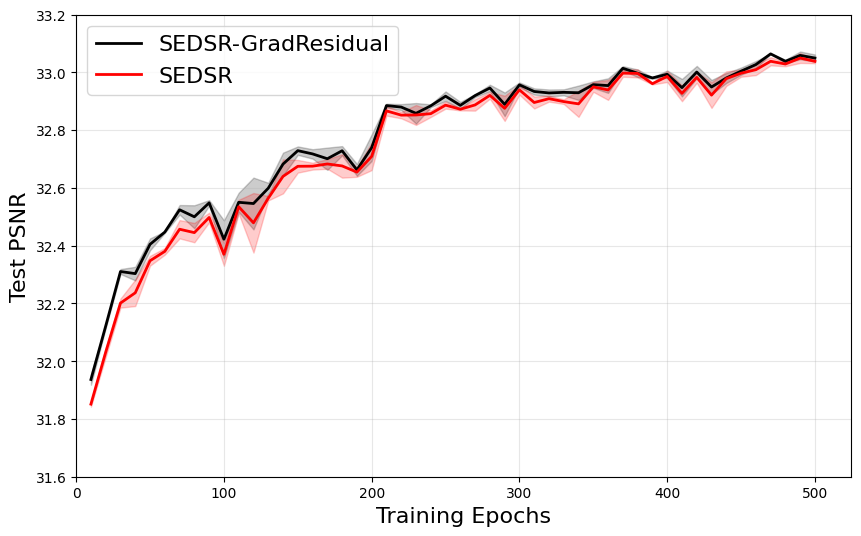}
\caption{Set14}
\end{subfigure}
\begin{subfigure}{\figwidthfive}
\centering
\includegraphics[width=\linewidth]{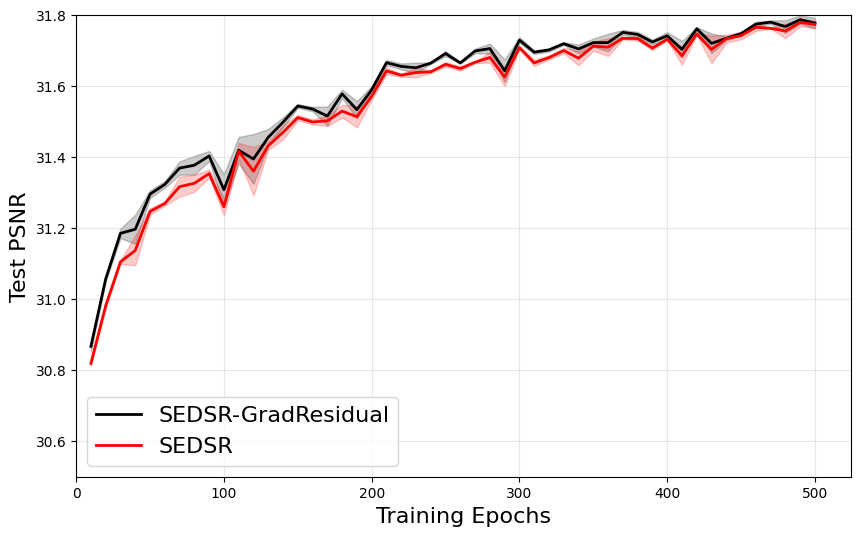}
\caption{BSD100}
\end{subfigure}\hfill
\begin{subfigure}{\figwidthfive}
\centering
\includegraphics[width=\linewidth]{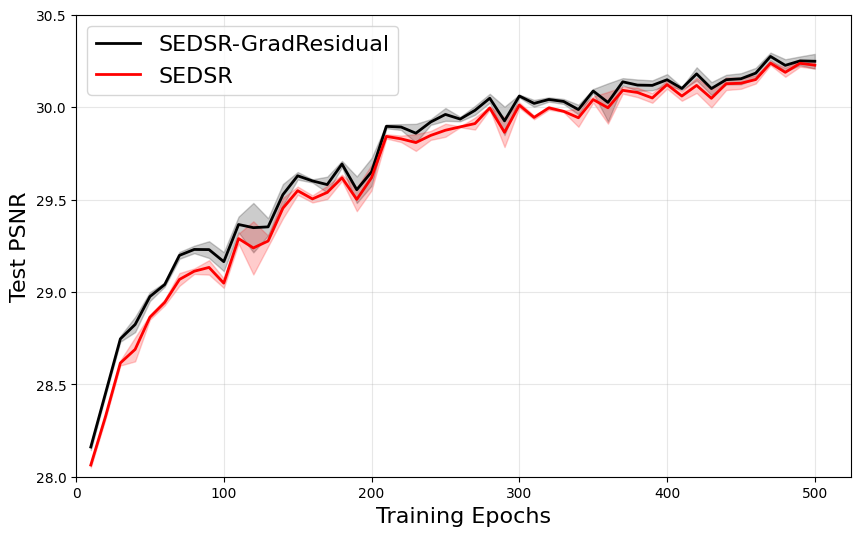}
\caption{Urban100}
\end{subfigure}
\caption{Learning curves of test performance measured by PSNR v.s. training epochs on standard benchmarks. The first row is with batch size = 32, and the second row is with batch size = 16. We train for 500 epochs in total and evaluate every 10 epochs. The results are averaged over 3 random seeds. 
}
\label{fig:sedsr-batchsize}
\end{figure*}

\textbf{The effect of batch normalization.} Table \ref{tab:srresnet_best} and \ref{tab:srresnet_final} show the PSNR result with and without batch normalization when using SRResNet as base architecture. It can be clearly observed that for both residual types, removing batch normalization could be beneficial. 

\begin{table}[ht]
\centering
\begin{tabular}{lcccc}
\toprule
Dataset & SRResNet-NoBN & SRResNet-GradResidual-NoBN & SRResNet & SRResNet-GradResidual \\
\midrule
Urban100 & $29.41 \pm 0.006121$ & $29.39 \pm 0.03227$ & $28.79 \pm 0.02725$ & $28.67 \pm 0.1009$ \\
BSD100   & $31.45 \pm 0.01461$ & $31.43 \pm 0.01831$ & $31.19 \pm 0.01260$ & $31.12 \pm 0.04933$ \\
Set5     & $36.78 \pm 0.02131$ & $36.76 \pm 0.03938$ & $36.16 \pm 0.01422$ & $36.01 \pm 0.1358$ \\
Set14    & $32.59 \pm 0.03810$ & $32.58 \pm 0.02599$ & $32.19 \pm 0.02241$ & $32.12 \pm 0.06223$ \\
DIV2K    & $34.85 \pm 0.009897$ & $34.82 \pm 0.04140$ & $34.40 \pm 0.03725$ & $34.29 \pm 0.08452$ \\
\bottomrule
\end{tabular}
\caption{Best PSNR $\pm$ std.\ error.}\label{tab:srresnet_best}
\end{table}

\begin{table*}[ht]
\centering
\begin{tabular}{lcccc}
\toprule
Dataset & SRResNet-NoBN & SRResNet-GradResidual-NoBN & SRResNet & SRResNet-GradResidual \\
\midrule
Urban100 & $29.26 \pm 0.01497$ & $29.24 \pm 0.02788$ & $28.67 \pm 0.01212$ & $28.52 \pm 0.1280$ \\
BSD100   & $31.40 \pm 0.02033$ & $31.34 \pm 0.02539$ & $31.11 \pm 0.007074$ & $31.04 \pm 0.06080$ \\
Set5     & $36.63 \pm 0.02720$ & $36.52 \pm 0.04348$ & $35.96 \pm 0.08048$ & $35.79 \pm 0.1715$ \\
Set14    & $32.52 \pm 0.01642$ & $32.46 \pm 0.03249$ & $32.10 \pm 0.01223$ & $32.02 \pm 0.07265$ \\
DIV2K    & $34.72 \pm 0.04557$ & $34.60 \pm 0.05540$ & $34.21 \pm 0.006017$ & $34.08 \pm 0.06475$ \\
\bottomrule
\end{tabular}
\caption{Average final PSNR $\pm$ std.\ error. }\label{tab:srresnet_final}
\end{table*}

\textbf{Examples of generated images.} We show some examples of generated images from standard testing benchmarks in Figures~\ref{fig:sr_gen_urban100}, \ref{fig:sr_gen_bsd100}, \ref{fig:sr_gen_set5}, \ref{fig:sr_gen_set14}, and \ref{fig:sr_gen_div2k}. We do not include all generated images because of their large storage requirements (on the order of gigabytes) and because they are less informative than the reported PSNR results and learning curves.

\begin{figure}[t]
    \centering
    \includegraphics[width=0.8\textwidth]{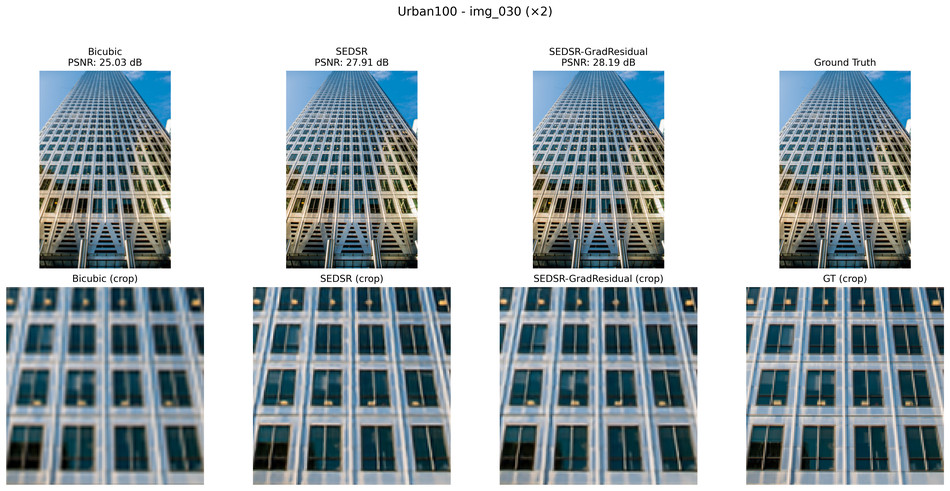}
    \caption{Urban100. From left to right, the images are generated using bicubic interpolation, SEDSR, SEDSR-GradResidual, and the ground truth.}
    \label{fig:sr_gen_urban100}
\end{figure}

\begin{figure}[t]
    \centering
    \includegraphics[width=0.8\textwidth]{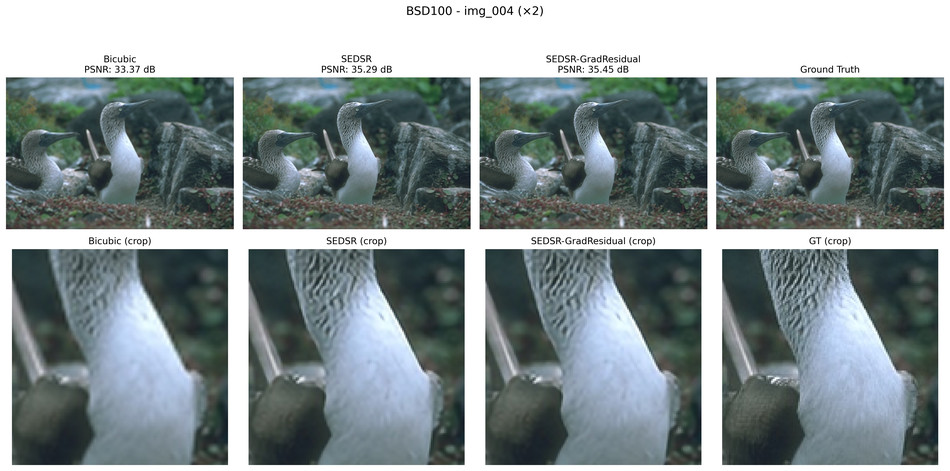}
    \caption{BSD100. From left to right, the images are generated using bicubic interpolation, SEDSR, SEDSR-GradResidual, and the ground truth.}
    \label{fig:sr_gen_bsd100}
\end{figure}

\begin{figure}[t]
    \centering
    \includegraphics[width=0.8\textwidth]{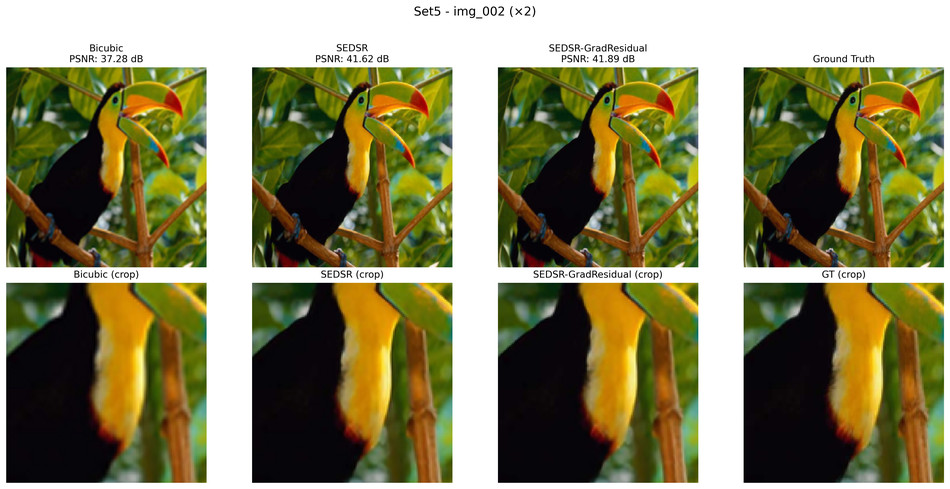}
    \caption{Set5. From left to right, the images are generated using bicubic interpolation, SEDSR, SEDSR-GradResidual, and the ground truth.}
    \label{fig:sr_gen_set5}
\end{figure}

\begin{figure}[t]
    \centering
    \includegraphics[width=0.8\textwidth]{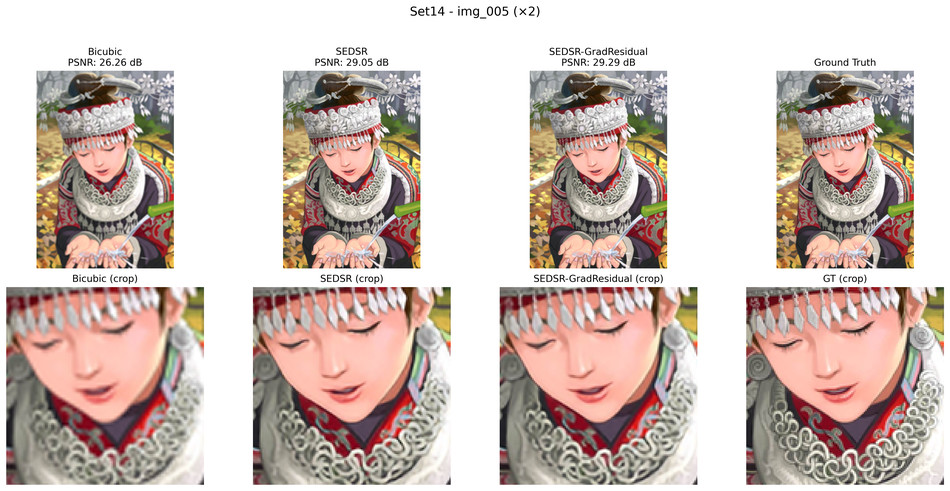}
    \caption{Set14. From left to right, the images are generated using bicubic interpolation, SEDSR, SEDSR-GradResidual, and the ground truth.}
    \label{fig:sr_gen_set14}
\end{figure}

\begin{figure}[t]
    \centering
    \includegraphics[width=0.8\textwidth]{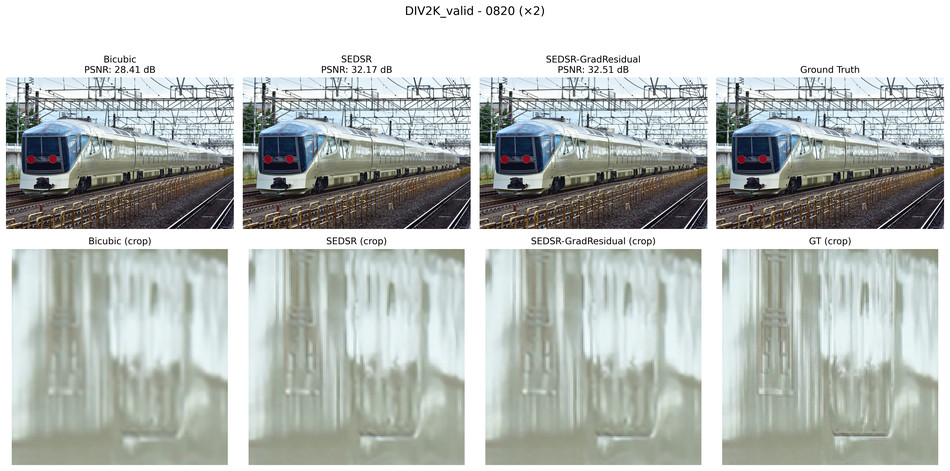}
    \caption{Div2k validation. From left to right, the images are generated using bicubic interpolation, SEDSR, SEDSR-GradResidual, and the ground truth.}
    \label{fig:sr_gen_div2k}
\end{figure}

\subsection{Additional Gradient Residual Design on Super Resolution}\label{sec:appendix-variants-sr}

Recall that our proposed version is $\hvec(\xvec) \defeq \Fvec(\xvec) + (1-\sigma(\alpha)) \xvec + \sigma(\alpha) \sum_{i=1}^d \nabla  \Fvec_i(\xvec)$. We test the below two additional residual designs on super resolution tasks: 

1) In the first design, we define two independent trainable scalars instead of using the convex combination:
\begin{equation}\label{eq:independent-gradscale}
   \hvec(\xvec) \defeq \Fvec(\xvec) + \sigma(\beta) \xvec + \sigma(\alpha) \sum_{i=1}^d \nabla  \Fvec_i(\xvec) 
\end{equation}
and hence the baseline is using: 
\begin{equation}\label{eq:independent-resscale}
   \hvec(\xvec) \defeq \Fvec(\xvec) + \sigma(\beta) \xvec. 
\end{equation}

2) In the second design, we parameterize the two trainable scalars so that they become sample-dependent:
\begin{equation}\label{eq:sample-dependent-gradscale}
   \hvec(\xvec) \defeq \Fvec(\xvec) + \sigma(\beta(\xvec)) \xvec + \sigma(\alpha(\xvec)) \sum_{i=1}^d \nabla  \Fvec_i(\xvec), 
\end{equation}
where $\alpha(\cdot)$ and $\beta(\cdot)$ are parameterized in the same way but with distinct sets of parameters. And the baseline is using:
\begin{equation}\label{eq:sample-dependent-resscale}
   \hvec(\xvec) \defeq \Fvec(\xvec) + \sigma(\beta(\xvec)) \xvec. 
\end{equation}
Such additional layer consists of a global pooling operation followed by a linear mapping and then a sigmoid transformation. The rationale behind this design is that not all samples lie in high-frequency regions; therefore, only a subset of samples may require a larger scaling factor on the gradient component.


For variant (2), the overall observations differ from those by the convex combination setting in Section \ref{sec:imageexp}. When SRResNet is used as the base architecture (Table \ref{tab:srresnet_samplebasedscale_best} and \ref{tab:srresnet_samplebasedscalar_final}), the gradient residual significantly outperforms the baseline SRResNet model, which contrasts with our earlier observations under the convex combination formulation. When SEDSR is used as the base architecture, the gradient residual variant shows a mild improvement (Table \ref{tab:sedsr_samplebasedscale_best} and \ref{tab:sedsr_samplebasedscale_final}), whereas for EDSR, the baseline model with standard residual connections exhibits a slight advantage over the gradient residual variant (Table \ref{tab:edsr_samplebasedscale_best} and \ref{tab:edsr_samplebasedscale_final}).

These inconsistent results may indicate that many existing deep architectures are highly optimized around standard residual connections. This suggests that further investigation is warranted to understand how gradient residual connections can be effectively integrated into existing architectures, or how network designs and training protocols should be adapted when introducing gradient-based connections.

\begin{table}[ht]
\centering
\begin{tabular}{lcc}
\toprule
Dataset & SRResNet & SRResNet-GradResidual \\
\midrule
Urban100 & $27.72 \pm 0.009831$ & $\mathbf{28.52 \pm 0.03805}$ \\
BSD100   & $30.61 \pm 0.01745$  & $\mathbf{31.08 \pm 0.01863}$ \\
Set5     & $35.11 \pm 0.06085$  & $\mathbf{35.90 \pm 0.03865}$ \\
Set14    & $31.55 \pm 0.02016$  & $\mathbf{31.96 \pm 0.02154}$ \\
DIV2K    & $33.59 \pm 0.04453$  & $\mathbf{34.27 \pm 0.01925}$ \\
\bottomrule
\end{tabular}
\caption{Best evaluation PSNR $\pm$ std.\ error for SRResNet with sample-based scaling. The best is defined as the best evaluation result averaged over 3 random seeds during training.}
\label{tab:srresnet_samplebasedscale_best}
\end{table}

\begin{table}[ht]
\centering
\begin{tabular}{lcc}
\toprule
Dataset & SEDSR & SEDSR-GradResidual \\
\midrule
Urban100 & $30.13 \pm 0.008821$ & $\mathbf{30.18 \pm 0.007823}$ \\
BSD100   & $31.75 \pm 0.004790$ & $\mathbf{31.76 \pm 0.003564}$ \\
Set5     & $37.27 \pm 0.007073$ & $\mathbf{37.30 \pm 0.002883}$ \\
Set14    & $32.99 \pm 0.004689$ & $\mathbf{33.02 \pm 0.002467}$ \\
DIV2K    & $35.36 \pm 0.004687$ & $\mathbf{35.37 \pm 0.005749}$ \\
\bottomrule
\end{tabular}
\caption{Average final PSNR $\pm$ std.\ error for SEDSR with sample-based scaling. Residual connections are based on \eqref{eq:sample-dependent-gradscale} and \eqref{eq:sample-dependent-resscale}.}
\label{tab:sedsr_samplebasedscale_final}
\end{table}

\begin{table}[ht]
\centering
\begin{tabular}{lcc}
\toprule
Dataset & SEDSR & SEDSR-GradResidual \\
\midrule
Urban100 & $30.18 \pm 0.02804$ & $\mathbf{30.25 \pm 0.03192}$ \\
BSD100   & $31.77 \pm 0.01025$ & $\mathbf{31.79 \pm 0.003427}$ \\
Set5     & $37.33 \pm 0.01449$ & $\mathbf{37.35 \pm 0.003476}$ \\
Set14    & $33.02 \pm 0.007811$ & $\mathbf{33.06 \pm 0.007423}$ \\
DIV2K    & $35.39 \pm 0.01478$ & $\mathbf{35.42 \pm 0.006848}$ \\
\bottomrule
\end{tabular}
\caption{Best evaluation PSNR $\pm$ std.\ error for SEDSR with sample-based scaling. Residual connections are based on \eqref{eq:sample-dependent-gradscale} and \eqref{eq:sample-dependent-resscale}.}
\label{tab:sedsr_samplebasedscale_best}
\end{table}

\begin{table}[ht]
\centering
\begin{tabular}{lcc}
\toprule
Dataset & EDSR & EDSR-GradResidual \\
\midrule
Urban100 & $\mathbf{29.99 \pm 0.04601}$ & $29.88 \pm 0.02466$ \\
BSD100   & $\mathbf{31.74 \pm 0.01964}$ & $31.71 \pm 0.01474$ \\
Set5     & $\mathbf{37.24 \pm 0.01501}$ & $37.22 \pm 0.03968$ \\
Set14    & $\mathbf{32.95 \pm 0.02239}$ & $32.91 \pm 0.01818$ \\
DIV2K    & $\mathbf{35.26 \pm 0.02206}$ & $35.22 \pm 0.02637$ \\
\bottomrule
\end{tabular}
\caption{Average final PSNR $\pm$ std.\ error for EDSR with sample-based scaling. Residual connections are based on \eqref{eq:sample-dependent-gradscale} and \eqref{eq:sample-dependent-resscale}.}
\label{tab:edsr_samplebasedscale_final}
\end{table}

\begin{table}[ht]
\centering
\begin{tabular}{lcc}
\toprule
Dataset & EDSR & EDSR-GradResidual \\
\midrule
Urban100 & $\mathbf{30.10 \pm 0.002069}$ & $30.00 \pm 0.02343$ \\
BSD100   & $\mathbf{31.78 \pm 0.01213}$  & $31.75 \pm 0.009103$ \\
Set5     & $\mathbf{37.35 \pm 0.02212}$  & $37.34 \pm 0.03808$ \\
Set14    & $\mathbf{33.01 \pm 0.01659}$  & $32.97 \pm 0.01383$ \\
DIV2K    & $\mathbf{35.34 \pm 0.01257}$  & $35.29 \pm 0.01514$ \\
\bottomrule
\end{tabular}
\caption{Best evaluation PSNR $\pm$ std.\ error for EDSR with sample-based scaling. Residual connections are based on \eqref{eq:sample-dependent-gradscale} and \eqref{eq:sample-dependent-resscale}.}
\label{tab:edsr_samplebasedscale_best}
\end{table}

\begin{table}[ht]
\centering
\begin{tabular}{lcc}
\toprule
Dataset & SEDSR & SEDSR-GradResidual \\
\midrule
urban100 & $30.07 \pm 0.02539$ & $30.08 \pm 0.008159$ \\
bsd100   & $31.73 \pm 0.009620$ & $31.73 \pm 0.003866$ \\
set5     & $37.25 \pm 0.02782$  & $37.24 \pm 0.01426$ \\
set14    & $32.96 \pm 0.01746$  & $32.96 \pm 0.006714$ \\
div2k    & $35.31 \pm 0.01859$  & $35.31 \pm 0.007634$ \\
\bottomrule
\end{tabular}
\caption{Final evaluation based on SEDSR. Trainable residual scalar.}
\label{tab:sedsr-final-trainres}
\end{table}

\begin{table}[ht]
\centering
\begin{tabular}{lcc}
\toprule
Dataset & SEDSR & SEDSR-GradResidual \\
\midrule
urban100 & $30.14 \pm 0.03268$ & $30.15 \pm 0.02692$ \\
bsd100   & $31.75 \pm 0.007000$ & $31.76 \pm 0.004182$ \\
set5     & $37.33 \pm 0.03642$  & $37.31 \pm 0.02172$ \\
set14    & $32.99 \pm 0.02072$  & $33.01 \pm 0.01227$ \\
div2k    & $35.34 \pm 0.01703$  & $35.35 \pm 0.01379$ \\
\bottomrule
\end{tabular}
\caption{Best evaluation based on SEDSR. Trainable residual scalar.}
\label{tab:sedsr-best-trainres}
\end{table}

\begin{table}[ht]
\centering
\begin{tabular}{lcc}
\toprule
Dataset & EDSR & EDSR-GradResidual \\
\midrule
urban100 & $30.00 \pm 0.01045$   & $30.00 \pm 0.008629$ \\
bsd100   & $31.71 \pm 0.0003460$ & $31.71 \pm 0.003574$ \\
set5     & $37.21 \pm 0.02321$   & $37.22 \pm 0.01643$ \\
set14    & $32.92 \pm 0.008248$  & $32.92 \pm 0.005228$ \\
div2k    & $35.27 \pm 0.01657$   & $35.27 \pm 0.007332$ \\
\bottomrule
\end{tabular}
\caption{Final evaluation based on EDSR, Trainable residual scalar.}
\label{tab:edsr-final-trainres}
\end{table}

\begin{table}[ht]
\centering
\begin{tabular}{lcc}
\toprule
Dataset & EDSR & EDSR-GradResidual \\
\midrule
urban100 & $30.13 \pm 0.007543$ & $30.13 \pm 0.001675$ \\
bsd100   & $31.76 \pm 0.007424$ & $31.76 \pm 0.008123$ \\
set5     & $37.32 \pm 0.01352$  & $37.32 \pm 0.007590$ \\
set14    & $32.99 \pm 0.01038$  & $32.98 \pm 0.009934$ \\
div2k    & $35.35 \pm 0.03103$  & $35.37 \pm 0.008518$ \\
\bottomrule
\end{tabular}
\caption{Peak evaluation for EDSR. Trainable residual scalar. }
\label{tab:edsr-best-trainres}
\end{table}

\begin{table}[ht]
\centering
\begin{tabular}{lcc}
\toprule
Dataset & SRResNet & SRResNet-GradResidual \\
\midrule
urban100 & $28.64 \pm 0.02679$ & $28.63 \pm 0.03761$ \\
bsd100   & $31.10 \pm 0.01198$ & $31.10 \pm 0.01542$ \\
set5     & $35.93 \pm 0.05233$ & $35.90 \pm 0.08888$ \\
set14    & $32.08 \pm 0.02400$ & $32.07 \pm 0.02621$ \\
div2k    & $34.13 \pm 0.06527$ & $34.14 \pm 0.03375$ \\
\bottomrule
\end{tabular}
\caption{Final evaluation for SRResNet. Trainable residual scalar.}
\label{tab:srresnet-final-trainres}
\end{table}

\begin{table}[ht]
\centering
\begin{tabular}{lcc}
\toprule
Dataset & SRResNet & SRResNet-GradResidual \\
\midrule
urban100 & $28.78 \pm 0.01764$ & $28.76 \pm 0.03504$ \\
bsd100   & $31.20 \pm 0.01389$ & $31.17 \pm 0.01894$ \\
set5     & $36.17 \pm 0.03770$ & $36.12 \pm 0.08022$ \\
set14    & $32.20 \pm 0.02628$ & $32.17 \pm 0.03300$ \\
div2k    & $34.41 \pm 0.05192$ & $34.35 \pm 0.05427$ \\
\bottomrule
\end{tabular}
\caption{Peak evaluation for SRResNet. Trainable residual scalar.}
\label{tab:srresnet-best-trainres}
\end{table}

\subsection{Classification and Segmentation}\label{sec:appendix-classify-seg}

On CIFAR-10/100 \cite{krizhevsky2009learning}, we use ResNet20 \cite{he2016deepresidual} as base model. We sweep the learning rate over ${0.05, 0.1, 0.2}$ and the weight decay over $\{0.0, 5\times10^{-5}, 1\times10^{-4}\}$. Models are trained for $200$ epochs using SGD and evaluated every $5$ epochs. For our gradient-based variant, we additionally sweep the convex combination scalar parameter $\alpha$ over the range $\{-3, 3\}$. Experiments on this domain are conducted using $5$ random seeds. 

For the semantic segmentation task, we use the PASCAL VOC dataset \cite{everingham2010pascal} and adopt a simple encoder–decoder architecture with a pretrained MobileNetV3-Large backbone extracted from the torchvision LRASPP implementation \cite{howard2019mobilenetv3}. We use a batch size of $32$ and train for $200$ epochs, with evaluation performed every $5$ epochs. All models are optimized using stochastic gradient descent with momentum set to $0.9$, and we sweep the weight decay over $\{1\times 10^{-4}, 5\times10^{-4}\}$. Experiments on this domain are conducted using $3$ random seeds. The mean Intersection-over-Union (mIoU) results are reported in Table~\ref{tab:class_seg_results}. For class $c$, let $P_c$ be the set of pixels predicted as class $c$ and $G_c$ be the set of ground-truth pixels labeled as class $c$ (excluding ignored pixels). The Intersection-over-Union for class $c$ is
$$
\mathrm{IoU}_c = \frac{|P_c \cap G_c|}{|P_c \cup G_c|}.
$$
Let $\mathcal{C}$ denote the set of classes used for evaluation. The mIoU is
$$
\mathrm{mIoU} = \frac{1}{|\mathcal{C}|}\sum_{c \in \mathcal{C}} \mathrm{IoU}_c.
$$

\begin{table}[t]
\centering
\begin{tabular}{lcc}
\toprule
\textbf{Dataset / Metric} 
& \textbf{W.O. GradResidual} 
& \textbf{W. GradResidual} \\
\midrule
CIFAR-10 (Final)
& $0.9183 \pm 0.0010$
& $0.9158 \pm 0.0014$ \\

CIFAR-10 (Best)
& $0.9187 \pm 0.0007$
& $0.9169 \pm 0.0013$ \\

CIFAR-100 (Final)
& $0.6688 \pm 0.0018$
& $0.6661 \pm 0.0004$ \\

CIFAR-100 (Best)
& $0.6753 \pm 0.0011$
& $0.6734 \pm 0.0010$ \\

VOC (Final)
& $0.6211 \pm 0.0006$
& $0.6237 \pm 0.0010$ \\

VOC (Best)
& $0.6299 \pm 0.0030$
& $0.6325 \pm 0.0009$ \\
\bottomrule
\end{tabular}
\caption{
Comparison of models with and without GradResidual across datasets.
On CIFAR-10 and CIFAR-100, accuracy is reported, while on VOC, mean intersection-over union (mIoU) is reported.
Final denotes the average over final $25\%$ evaluations, and Best denotes the peak performance during training.
CIFAR results are averaged over 5 random seeds and VOC results over 3 random seeds.
For GradResidual variants, the gradient scale initialization is $\alpha = -3.0$.
}
\label{tab:class_seg_results}
\vspace{-0.3cm}
\end{table}

\subsection{Additional Literature Review}\label{sec:appendix-relatedwork}

To the best of our knowledge, the observation from our synthetic experiments that residual connections may hinder the approximation of high-frequency functions has received limited direct attention in the literature. Although some prior work has explored potential connections between frequency characteristics and residual connections, these studies are only distantly related to our findings. For example, \citet{belfer2024resntk} studies the spectral properties of the neural tangent kernel associated with residual networks, such as eigenvalues, eigenfunctions, and smoothness, providing insights into their learning and generalization behavior.

We also note that since the original introduction of residual networks, numerous variants, adaptations, and extensions have been proposed to address a wide range of objectives. However, these works are typically motivated by orthogonal considerations or adopt fundamentally different perspectives and approaches from ours. For example, \citet{zagoruyko2016wideresnet} argues that making networks wider is often more worthwhile, giving better accuracy and faster training than simply going deeper. \citet{xie2017resnext} introduces cardinality (parallel grouped convolutions inside a residual block), which achieves strong empirical performance. Inception-ResNet \cite{szegedy2017inceptionv4} integrates Inception modules with residual connections, and shows that residual connections can significantly speed up optimization for very deep and sophisticated CNNs. SENet and CBAM \cite{hu2018senet,woo2018cbam} introduce lightweight attention into residual backbones.

Several works focus on improving robustness/regularization. \citet{gastaldi2017shakeshake} randomly mixes multi-branch outputs during training, injecting structured noise that improves generalization. Similarly, \citet{huang2016stochasticdepth,yamada2019shakedrop} randomly drops or perturbs residual branches to improve robustness and training efficiency. From a more theoretical viewpoint, Neural ODEs \cite{chen2018neuralode} view residual connections as continuous-time dynamics and use an ODE solver instead of discrete layers. For low-level vision, \citet{lim2017edsr,wang2018esrgan} propose architecture adjustments for super-resolution, such as removing batch normalization and modifying/scaling residual blocks. Finally, \citet{sandler2018mobilenetv2} adapts the residual idea into an efficient design (inverted residuals and linear bottlenecks) that is particularly suitable for mobile/edge devices while preserving performance.

\end{document}